\pretocmd{\NAT@citex}{%
	\let\NAT@hyper@\NAT@hyper@citex
	\def\NAT@postnote{#2}%
	\setcounter{NAT@total@cites}{0}%
	\setcounter{NAT@count@cites}{0}%
	\forcsvlist{\stepcounter{NAT@total@cites}\@gobble}{#3}}{}{}
\newcounter{NAT@total@cites}
\newcounter{NAT@count@cites}
\def\NAT@postnote{}
\def\NAT@hyper@citex#1{%
	\stepcounter{NAT@count@cites}%
	\hyper@natlinkstart{\@citeb\@extra@b@citeb}#1%
	\ifnumequal{\value{NAT@count@cites}}{\value{NAT@total@cites}}
	{\ifNAT@swa\else\if*\NAT@postnote*\else%
		\NAT@cmt\NAT@postnote\global\def\NAT@postnote{}\fi\fi}{}%
	\ifNAT@swa\else\if\relax\NAT@date\relax
	\else\NAT@@close\global\let\NAT@nm\@empty\fi\fi
	\hyper@natlinkend}
\renewcommand\hyper@natlinkbreak[2]{#1}
\pgfplotsset{compat=1.16}
\pgfplotsset{grid = major, grid style={gray!30!white}}
\definecolor{darkgreen}{rgb}{0.31, 0.47, 0.26}
\tikzset{directSaaStyle/.style={blue, mark=*,mark options={fill=blue}, line width=0.5mm}}
\tikzset{robustSaa10Style/.style={red, mark=square*,mark options={fill=red}, line width=0.5mm}}
\tikzset{alphaStyle/.style={black, dashed, mark=none, line width=0.5mm}}
\tikzset{confFillStyle/.style={opacity=0.2}}
\tikzset{confBorderAStyle/.style={name path=A, draw=none, mark=none}}
\tikzset{confBorderBStyle/.style={name path=B, draw=none, mark=none}}
\tikzset{robustSaa5Style/.style={darkgreen, mark=triangle*,mark options={fill=darkgreen}, line width=0.5mm}}
\tikzset{isolationForestStyle/.style={black, dashed,  mark=diamond*,mark options={fill=black, solid}, line width=0.5mm}}
\tikzset{naiveStyle/.style={dash dot, brown, mark=pentagon*, mark options={fill=brown}, mark size={3}, line width=0.5mm}}
\tikzset{lofStyle/.style={cyan, dashed, mark=diamond*, mark options={fill=cyan, solid}, line width=0.5mm}}
\tikzset{robxStyle/.style={violet, dashed, mark=o, mark options={fill=violet, solid}, line width=0.5mm}}
\DeclareMathOperator{\Prob}{\mathbb{P}}
\DeclareMathOperator{\CE}{\mathcal{C}} 
\DeclareMathOperator{\Ber}{Bernoulli} 
\DeclareMathOperator{\Binom}{Bin} 
\newcommand{\pStar}{p_{N, \alpha}^*} 
\begin{document}
\title{Don't Explain Noise: Robust Counterfactuals for Randomized Ensembles}
\titlerunning{Don't Explain Noise: Robust Counterfactuals for Randomized Ensembles}

\author{
        Alexandre Forel\inst{1}\orcidID{0000-0002-9868-4804} \and
        Axel Parmentier\inst{2}\orcidID{0000-0003-1762-4947} \and
        Thibaut Vidal\inst{1}\orcidID{0000-0001-5183-8485}}
\authorrunning{Forel et al.}

\institute{
Department of Mathematics and Industrial Engineering,\\ Polytechnique Montreal, Montreal, Canada\\ \email{\{alexandre.forel, thibaut.vidal\}@polymtl.ca} \and
CERMICS, Ecole des Ponts, Marne-la-Vall\'ee, France\\
\email{axel.parmentier@enpc.fr}}
\maketitle              

\begin{abstract}
  Counterfactual explanations describe how to modify a feature vector in order to flip the outcome of a trained classifier. Obtaining robust counterfactual explanations is essential to provide valid algorithmic recourse and meaningful explanations. We study the robustness of explanations of randomized ensembles, which are always subject to algorithmic uncertainty even when the training data is fixed. We formalize the generation of robust counterfactual explanations as a probabilistic problem and show the link between the robustness of ensemble models and the robustness of base learners. We develop a practical method with good empirical performance and support it with theoretical guarantees for ensembles of convex base learners. Our results show that existing methods give surprisingly low robustness: the validity of naive counterfactuals is below $50\%$ on most data sets and can fall to $20\%$ on problems with many features. In contrast, our method achieves high robustness with only a small increase in the distance from counterfactual explanations to their initial observations.
\end{abstract}
\section{Introduction}

Counterfactual explanations provide a course of action to change the outcome of a classifier and reach a target class. Since the seminal work of \cite{Wachter2017}, counterfactual explanations have received a large amount of attention \citep{Karimi2022, Verma2020}. Their applications can be divided into two main categories: (1)~to provide algorithmic recourse to users or customers so that they can react to a given outcome (e.g., a customer applies for a loan and is rejected), and (2)~to explain the recommendation of complex classifiers to stakeholders (e.g., a medical diagnosis based on a large set of features).

In high-stakes environments, such as the loan application example, algorithmic recourse must remain valid over time. That is, the loan should be approved when the customer returns after having acted upon the explanation received. Yet, the classification model might be retrained in the meantime, which alters its prediction function. The robustness of explanations when a model is retrained has been studied when additional data is observed, possibly affected by a shift in the distribution of the data-generating process \citep{Bui2022, Dutta2022, Rawal2020, Upadhyay2021}. However, a fundamental source of uncertainty has been overlooked so far: when the classifier is a randomized ensemble, \emph{the random training procedure always leads to algorithmic uncertainty} when retraining the model, even if the training data is fixed.

From an explainability perspective, a lack of robustness due to the random training procedure when the training data is fixed is already very problematic. It raises the question of whether explanations that are not robust to model retraining allow any meaningful interpretation of a classifier's decisions. This issue is akin to the concept of predictive multiplicity, in which models from different classes have similar average performance but wildly different predictions for certain samples \citep{Hsu2022}. By design, nearest counterfactual explanations identify the minimum change that flips the classifier's label. Thus, they might be attracted to regions of the feature space that are particularly vulnerable to predictive multiplicity. In that case, the explanations obtained are only noise: artifacts of the random training procedure that exploits the closest region with high predictive multiplicity. Such non-robust explanations are equivalent to adversarial examples, which have no explainability value but are optimized to fool a given model \citep{Ignatiev2019, Pawelczyk2022}.

In this paper, we study the robustness of counterfactual explanations of randomized ensembles to algorithmic uncertainty. Ensemble learning is a powerful technique that aggregates several models to reduce the risk of over-fitting and achieve high generalization power \citep{Sagi2018, Hastie2009}. Common approaches to building an ensemble of learners revolve around the use of randomization (e.g., the bootstrap aggregating procedure of random forests) and are thus susceptible to algorithmic uncertainty. A key result of our paper is to show that naive explanations that ignore the algorithmic uncertainty of random ensembles are not robust to model retraining even when the training data is fixed. Hence, naive explanations provide neither robust algorithmic recourse nor explainability.

To bridge this gap, we develop methods to obtain counterfactual explanations that are robust to algorithmic uncertainty. We make the following contributions:
\begin{enumerate}[topsep=-0.9\parskip, noitemsep, wide=5pt]
    \item We show that naive methods to generate counterfactual explanations fail to provide robust explanations on common data sets --- the validity is often below $50\%$ and falls below $20\%$ on the most complex data set.
    \item We derive an efficient method to generate robust explanations by identifying a robust threshold on the ensemble's score. Our approach is flexible in the sense that it can be combined with any counterfactual explanation method and applies to all ensembles made of independently trained base learners (e.g., random forests, deep ensembles with random weights initialization). We demonstrate the value of our approach by obtaining explanations of random forest ensembles that are robust to algorithmic uncertainty on real-world data.
    \item We support our practical results with theoretical guarantees that hold for ensembles of convex learners, such as random forests made of trees with a single decision split or ensembles of input-convex neural networks.
    \item Finally, we study the connection between the predictive importance of features and the robustness of counterfactual explanations. We show that generating robust counterfactuals is more challenging for data sets with many features with high predictive importance. 
\end{enumerate}

\section{Problem Statement and Background}
\label{sec:problem}

We consider the standard binary classification setting. A training set $z_n = \left\lbrace(x_i, y_i)\right\rbrace_{i=1}^n$ of size $n$ is available where $x_i \in \mathcal{X}$ is a feature vector and $y_i \in \left\lbrace 0, 1 \right\rbrace$ is a binary label. We assume throughout the paper that the training samples are i.i.d. observations of an unknown distribution $P_{XY}$.

\subsection{Classification Ensembles}

We focus on classification ensembles made of base learners trained independently and identically. This encompasses for instance the random forests of \cite{Breiman2001} or ensembles of neural networks. The randomness of random forests stems from two factors: (1)~each base learner uses a re-sampling of the original training set (most commonly a bootstrap sample), and (2)~a random subset of features is selected at each node of the tree to identify the best split. Interestingly, bootstrapped ensembles of neural networks do not benefit from the performance improvements of their tree-based counterparts. Instead, deep ensembles are often made of base learners trained independently on the same training set with random initial weights \citep{Lakshminarayanan2017}. Randomization improves the generalization performance of ensembles as it reduces the correlation between base learners and thus significantly reduces variance (see Chapter 15, Section 15.4 of \cite{Hastie2009}). 

Formally, we denote by $t(\cdot;\xi) : \mathcal{X} \rightarrow \{0,1\}, x \mapsto t(x;\xi)$ the prediction function of a base learner with random training procedure parameterized by $\xi$. A classification ensemble $T(\cdot;\bm{\xi}) : \mathcal{X} \rightarrow \{0,1\}, x \mapsto T(x;\bm{\xi})$ consists of $N$ base learners $\{t(\cdot; \xi_i)\}_{i=1}^N$, where $\bm{\xi}=\{\xi_i\}_{i=1}^N$ parameterizes the training procedure of the ensemble. We denote by $h_N(\cdot;\bm{\xi}): \mathcal{X} \rightarrow [0,1]$ the score function of an ensemble, defined as the average of the class predictions:
\begin{equation}
    h_N(x;\bm{\xi}) = \frac{1}{N} \sum_{i=1}^N t(x;\xi_i),
\end{equation}
For any observation $x\in\mathcal{X}$, a trained ensemble returns a class prediction as a majority vote among the base learners, so that $T(x;\bm{\xi}) = 1 $ if $h_N(x;\bm{\xi}) \ge 1/2$ and $T(x;\bm{\xi}) = 0 $ otherwise.

\subsection{Counterfactual Explanations of Ensembles}
Let $T^0$ be a classification ensemble trained on $z_n$ with score function $h_N^0$, and let $x_{n+1}$ be a new observation with predicted class $T^0(x_{n+1};\bm{\xi}^0)$. We want to find the nearest counterfactual explanation of this class prediction. We assume, w.l.o.g, that the predicted class of $x_{n+1}$ is $T^0(x_{n+1};\bm{\xi}^0)=0$ so that the target class of its counterfactual explanation is~$1$. Multi-class problems can be converted to this setting by assigning label~$1$ to the target class and label~$0$ to all other classes.

We focus on nearest counterfactual explanations that minimize a distance function $f(\cdot, x_{n+1}) $, such as the $l_1$-norm that encourages sparse explanations for which only a few features are modified. Counterfactual explanations should satisfy two essential conditions: they should be actionable and plausible. Actionability ensures that users can act upon the algorithmic recourse by making sure that immutable features are not modified or respect a specific structure \citep{Ustun2019}. For instance, an individual cannot decrease their age. Plausibility ensures that the counterfactual explanation is not an outlier of the distribution of the target class. This can be done by ensuring that the likelihood of an explanation is larger than a desired threshold, where the likelihood can be estimated for instance using density estimation \citep{Artelt2020}, local outlier factor \citep{Kanamori2020}, or local-neighborhood search \citep{Laugel2020}. An open question is whether plausibility can also help in obtaining explanations that are robust to the random training procedure. We show in our experiments that this is not the case.

The majority of existing counterfactual methods modify the given sample until the target class is attained \citep[see e.g.,][]{Tolomei2017, Lucic2022}. Thus, they work in a heuristic fashion and do not guarantee that the explanation found is the closest one to the original sample. Conversely, approaches based on integer programming can determine counterfactuals that are optimal for the distance metric under consideration and readily integrate constraints that reflect the actionability of the feature changes \citep{Kanamori2020}. Such methods have proved especially relevant for generating counterfactual explanations over tree ensembles \citep{Cui2015}. In particular, \cite{Parmentier2021} provide an efficient formulation for cost-optimal counterfactuals in tree ensembles. Plausibility constraints are integrated using isolation forests, a tree-based method to estimate the likelihood of an explanation.

We can now state the problem of finding the nearest counterfactual explanation of $x_{n+1}$ in a general way as:
\begin{subequations}
	\label{opt:ce}
	\begin{alignat}{2}
		\min_x  & \quad && f(x, x_{n+1}) \label{opt:ce:obj}\\
		\text{s.t.} & && h_N^0(x;\bm{\xi}^0) \ge 1/2 \label{opt:ce:target}\\
		 & && x \in \mathcal{X}^a \cap \mathcal{X}^p . \label{opt:ce:domain}
	\end{alignat}
\end{subequations}
Constraint~\eqref{opt:ce:target} ensures that the counterfactual explanation reaches the target class according to the ensemble $T^0$. Constraint~\eqref{opt:ce:domain} specifies that the counterfactual explanation belongs to both the actionable domain $\mathcal{X}^a$ and the plausible domain $\mathcal{X}^p$. We call counterfactual explanations obtained by solving Problem~\eqref{opt:ce} \textit{naive} since they ignore the algorithmic uncertainty caused by the random training procedure of the ensemble.

\subsection{Algorithmic Uncertainty, Validity and Robustness}

Let $T^0$ and $T^1$ be two classification ensembles of size $N$ trained on the same training set $z_n$. Due to the random training procedure, the classifiers have different prediction functions. Let $\CE (\, \cdot \, ; T^0) : \mathcal{X} \rightarrow \mathcal{X}$ be an algorithm that maps any observation~$x$ to a counterfactual explanation $\hat{x}$ for the classifier $T^0$. We now formalize the concepts of validity and robustness of counterfactual explanations.
\begin{definition}[Validity]
    The counterfactual explanation $\hat{x}=\CE(x;T^0)$ is valid for classifier $T^1$ if $T^1(\hat{x}, \bm{\xi^1}) = 1$.
\end{definition}
\begin{definition}[Algorithmic robustness]
    \label{def:algoRobustness}
    A counterfactual algorithm $\CE \left(\, \cdot \, ; T^0 \right)$ is robust with tolerance $\alpha$ if, for any new observation $x_{n+1}$, the probability that its counterfactual explanation is valid for a new classifier $T$ trained on $z_n$ is greater than $(1-\alpha)$, that is:
    \begin{equation}
        \Prob_{\bm{\xi}} \left( T \left( \CE \left(x_{n+1} ; T^0\right);\bm{\xi} \right) = 1 \right) \ge 1 - \alpha,
    \end{equation}
     where the uncertainty is taken with regard to the random training procedure of $T(\cdot \, ; \bm{\xi})$ on the fixed training set $z_n$.
\end{definition}

Definition~\ref{def:algoRobustness} introduces robustness in a probabilistic sense over all possible ensembles of size $N$ trained on the fixed set~$z_n$. Intuitively, the tolerance parameter $\alpha$ controls the trade-off between the expected robustness of the counterfactual algorithm and the average distance between counterfactual explanations and the original observations. One of the goals of this work is to investigate the trade-off between these two objectives and to provide robust counterfactuals that remain close to their original observations.
\section{Robust Counterfactual Explanations}
\label{sec:method}

Due to the random training procedure of randomized ensembles, the class prediction of an ensemble trained on $z_n$ is a random variable. To generate a counterfactual explanation robust to algorithmic uncertainty, Problem~\eqref{opt:ce} has to be augmented with the probabilistic constraint introduced in Definition~\ref{def:algoRobustness}, which can be equivalently expressed as:
\begin{equation}
	\label{eq:probCons}
	\Prob_{\bm{\xi}} \left( h_N(x;\bm{\xi}) \ge 1/2 \right) \ge 1-\alpha.
\end{equation}

\textbf{Main result.} The core result of our paper is to show that, given a trained ensemble with score function $h_N^0$, we can obtain robust counterfactual explanations by replacing the complex probabilistic condition in Equation~\eqref{eq:probCons} with the much simpler deterministic condition:
\begin{equation}
	\label{eq:robCe}
    h_N^0(x;\bm{\xi}^0) \ge \tau(N, \alpha),
\end{equation}
where $\tau(N, \alpha) \in [1/2, 1]$ is a well-defined threshold. The computational complexity of finding robust explanations is thus the same as the one of the naive Problem~\eqref{opt:ce}.

Deriving this result consists of two steps: (i)~reformulating the robustness constraint on ensembles into an equivalent constraint on base learners, and (ii)~showing that the probability of any base learner to output the target class is well approximated by the score function of the initial ensemble. The remainder of this section details these two steps.

\subsection{Reformulating the Robustness Constraint}
\label{sec:treeRob}

We start by formalizing the statistical properties of the class predicted by a randomized ensemble when the training data $z_n$ is fixed. For any observation $x \in \mathcal{X}$, the event that a base learner trained on $z_n$ outputs the target class at $x$ can be seen as a random event with probability $p(x)$. Since we consider a binary classification setting, the class predicted by a base learner trained on $z_n$ thus follows a Bernoulli distribution: $t(x;\xi) \sim \Ber(p(x))$.

A key observation is that, given a randomized ensemble trained on $z_n$, its base learners are independent and identically distributed observations of any base learner trained on $z_n$. Consequently, the score of a randomized ensemble with $N$ base learners follows a binomial distribution:
\begin{equation}
    N\cdot h_N(x, \bm{\xi}) \sim \Binom(N, p(x)).
\end{equation}

Denote by $B(k;N,p)$ the cumulative distribution  function (c.d.f.) of the binomial distribution $\Binom(N, p)$ evaluated at $k$. The following property can be identified.
\begin{lemma}
    \label{lem:gN}
    Given $N \in \mathbb{N}$, the map $g_N: [0, 1] \to [0, 1], p \mapsto B\left(N/2 ; N, p\right)$ is decreasing and invertible.
\end{lemma}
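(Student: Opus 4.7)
The plan is to prove strict monotonicity of $g_N$ on $[0,1]$ and combine this with continuity together with the boundary values $g_N(0)=1$ and $g_N(1)=0$; the intermediate value theorem then delivers a continuous strictly decreasing bijection $[0,1]\to[0,1]$, which in particular is invertible. Writing $k := \lfloor N/2 \rfloor$ (so that the case of odd $N$ reduces to evaluating the c.d.f.~at the same integer as the case of even $N$), we have $g_N(p) = \sum_{j=0}^{k} \binom{N}{j} p^j (1-p)^{N-j}$, which exhibits $g_N$ as a polynomial in $p$ and settles continuity immediately.

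For monotonicity I would use a coupling argument. Let $U_1,\dots,U_N$ be i.i.d.~uniform on $[0,1]$ and set $X(p) := \sum_{i=1}^N \mathbf{1}\{U_i \le p\}$, so that $X(p) \sim \Binom(N, p)$ and $X(p) \le X(q)$ almost surely whenever $p \le q$. This immediately yields $g_N(q) = \Prob(X(q) \le k) \le \Prob(X(p) \le k) = g_N(p)$ for $p \le q$. To upgrade this to strict inequality when $0 < p < q < 1$, I would isolate a positive-probability event on which $X(p) \le k$ but $X(q) > k$: for instance, require exactly $k$ of the $U_i$ to fall in $[0,p]$, one to fall in $(p,q]$, and the remaining $N-k-1$ to fall in $(q,1]$. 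This event has strictly positive probability and separates the two distributions at level $k$.

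The boundary values are immediate: $\Binom(N,0)=0$ almost surely gives $g_N(0)=1$, while $\Binom(N,1)=N > k$ almost surely gives $g_N(1)=0$. Together with continuity and strict monotonicity on $(0,1)$, the intermediate value theorem shows that $g_N$ is a continuous strictly decreasing bijection $[0,1] \to [0,1]$, which is therefore invertible with continuous inverse.

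The only real obstacle is promoting the coupling inequality from weak to strict, which is why the explicit positive-probability event above is needed. A cleaner analytic alternative would be to differentiate the polynomial $g_N$ and invoke the classical telescoping identity $\frac{d}{dp} B(k; N, p) = -N \binom{N-1}{k} p^k (1-p)^{N-1-k}$, which is strictly negative on $(0,1)$ and gives strict monotonicity without any probabilistic coupling; this alternative trades a short combinatorial argument for a derivative identity taken as a black box.
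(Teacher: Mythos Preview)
Your proof is correct. Both the coupling argument and the derivative identity you offer as an alternative are valid routes to strict monotonicity, and your treatment of continuity and the boundary values is sound. One minor point: you establish strict decrease only on $(0,1)$ and then conclude a bijection on $[0,1]$; this is fine, but the implicit step is that weak monotonicity from the coupling plus strict decrease on the open interval forces strict decrease at the endpoints as well (a constant stretch touching an endpoint would contradict strict monotonicity in the interior).

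Your approach differs genuinely from the paper's. The paper invokes the classical identity expressing the binomial c.d.f.\ via the regularized incomplete beta function, $B(k;N,p)=I_{1-p}(N-k,k+1)$, and then reads off monotonicity and invertibility from the fact that $I_x(a,b)$ is the c.d.f.\ of a continuous distribution on $[0,1]$. This is a one-line proof once the identity is quoted, but it treats the beta connection as a black box. Your argument is more elementary and self-contained: the coupling via uniforms gives a probabilistic explanation of \emph{why} the binomial c.d.f.\ decreases in $p$, and the derivative formula you mention as an alternative is essentially the differentiated form of the same beta identity. The paper's route is shorter; yours requires no external references and makes the mechanism transparent.
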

The proof is based on the link between the c.d.f. of the binomial distribution and the c.d.f. of the beta distribution. It is given in Appendix~\ref{app:gNproof}
. Using Lemma~\ref{lem:gN}, we can reformulate the probabilistic condition in Equation~\eqref{eq:probCons} by an equivalent condition on the prediction of a single (random) base learner.
\begin{proposition}
    \label{th:fromForestToTree}
    Let $h_N$ be the score function of an ensemble of $N$ base learners trained on~$z_n$. The robustness condition $\Prob_{\bm{\xi}} \left( h_N(x, \bm{\xi}) \ge 1/2 \right) \ge (1-\alpha)$ is satisfied if $\Prob_{\xi} \left( t(x, \xi) = 1 \right) \ge g_N^{-1}(\alpha)$. The two conditions are equivalent when $N$ is odd.
\end{proposition}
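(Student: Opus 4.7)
The plan is to reduce the ensemble robustness condition to a statement about the single‐learner probability $p(x)=\Prob_{\xi}(t(x;\xi)=1)$, using the fact (recalled just before the statement) that $N\cdot h_N(x;\bm{\xi})\sim\Binom(N,p(x))$. With that distribution in hand, the robustness constraint becomes a tail probability of a binomial, which we can rewrite in terms of the c.d.f.\ $B(\,\cdot\,;N,p)$ and therefore in terms of the map $g_N$ from Lemma~\ref{lem:gN}. Inverting $g_N$ (which is legitimate by that lemma) converts the threshold on the tail probability into a threshold on $p(x)$.

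Concretely, I would first write
\begin{equation*}
\Prob_{\bm{\xi}}\bigl(h_N(x;\bm{\xi})\ge 1/2\bigr)
  \;=\;\Prob\bigl(\Binom(N,p(x))\ge N/2\bigr),
\end{equation*}
and then split on the parity of $N$. When $N$ is odd, the event $\{\Binom(N,p)\ge N/2\}$ coincides with $\{\Binom(N,p)\ge \lceil N/2\rceil\}=\{\Binom(N,p)>\lfloor N/2\rfloor\}$, so its probability equals $1-B(\lfloor N/2\rfloor;N,p)=1-g_N(p)$. Hence the robustness condition is exactly $g_N(p(x))\le\alpha$, and since $g_N$ is strictly decreasing and invertible this is equivalent to $p(x)\ge g_N^{-1}(\alpha)$, proving both directions of the proposition.

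When $N$ is even, the complementary event is $\{\Binom(N,p)\le N/2-1\}$, whose probability is $B(N/2-1;N,p)$. Because
\begin{equation*}
B(N/2;N,p) \;=\; B(N/2-1;N,p) + \binom{N}{N/2}p^{N/2}(1-p)^{N/2}\;\ge\;B(N/2-1;N,p),
\end{equation*}
we get $1-B(N/2-1;N,p)\ge 1-g_N(p)$. Therefore $p(x)\ge g_N^{-1}(\alpha)$ implies $g_N(p(x))\le\alpha$ and hence $\Prob_{\bm{\xi}}(h_N(x;\bm{\xi})\ge 1/2)\ge 1-\alpha$, giving the sufficient direction for even $N$. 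The middle binomial term is generally strictly positive, which is why the equivalence fails when $N$ is even and only the implication survives.

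The only delicate point is bookkeeping around the ceiling/floor at $N/2$: one must align the definition $g_N(p)=B(N/2;N,p)$ (which for odd $N$ tacitly means $B(\lfloor N/2\rfloor;N,p)$ since $B$ is a step function in its first argument) with the correct tail event $\{\Binom(N,p)\ge N/2\}$. Once that correspondence is established, the rest follows immediately from the monotonicity and invertibility granted by Lemma~\ref{lem:gN}.
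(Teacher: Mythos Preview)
Your proof is correct and follows essentially the same route as the paper: both arguments rewrite $\Prob_{\bm{\xi}}(h_N(x;\bm{\xi})\ge 1/2)$ as a binomial tail, bound it below by $1-B(N/2;N,p(x))=1-g_N(p(x))$ (with equality for odd $N$ since the point mass at $N/2$ vanishes), and then invert $g_N$ via Lemma~\ref{lem:gN}. The only cosmetic difference is that the paper isolates the nonnegative term $\Prob\bigl(\sum_i t(x;\xi_i)=N/2\bigr)$ directly, whereas you obtain the same inequality by comparing $B(N/2-1;N,p)$ with $B(N/2;N,p)$; these are equivalent statements.
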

 The proof is given in Appendix~\ref{app:fromForestToTree}. Proposition~\ref{th:fromForestToTree} provides a criterion to generate robust counterfactual explanations based only on the probability that a base learner predicts the target class. We denote by $\pStar = g_N^{-1}(\alpha)$ this key robustness threshold and provide two properties. They describe how the robustness threshold $\pStar$ varies with the tolerance $\alpha$ and the ensemble size $N$.
\begin{proposition}
    \label{prop:alphaSens}
    Let $N \in \mathbb{N}$, the threshold $\pStar$ is monotonic increasing with the robustness target $(1-\alpha)$.
\end{proposition}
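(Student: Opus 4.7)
The plan is to derive this monotonicity directly from Lemma~\ref{lem:gN}, which already does all the heavy lifting. Since $p^*_{N,\alpha} = g_N^{-1}(\alpha)$, and the monotonicity of an invertible continuous map is preserved (with reversal) under inversion, I only need to transport the monotonicity of $g_N$ through the inverse, then flip signs to restate the result in terms of $(1-\alpha)$ rather than $\alpha$.

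More concretely, I would proceed in three short steps. First, invoke Lemma~\ref{lem:gN} to recall that $g_N:[0,1]\to[0,1]$ is strictly decreasing (hence injective) and invertible on its image. Second, observe that for any strictly decreasing bijection $g$, the inverse $g^{-1}$ is also strictly decreasing: if $\alpha_1 < \alpha_2$ and we set $p_i = g_N^{-1}(\alpha_i)$, then $g_N(p_1) < g_N(p_2)$; since $g_N$ is strictly decreasing, this forces $p_1 > p_2$, i.e., $g_N^{-1}(\alpha_1) > g_N^{-1}(\alpha_2)$. Third, rephrase: $\alpha \mapsto p^*_{N,\alpha}$ is strictly decreasing in $\alpha$, which is equivalent to saying that $p^*_{N,\alpha}$ is strictly increasing in the robustness target $(1-\alpha)$.

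There is essentially no obstacle here: once Lemma~\ref{lem:gN} is in hand, the statement is a one-line consequence of the general fact that inverses of strictly monotone bijections are strictly monotone in the opposite direction. The only thing to be mildly careful about is distinguishing the strict from the non-strict version of monotonicity and making sure the claim is stated with respect to $(1-\alpha)$ rather than $\alpha$, but both of these are cosmetic. No computation with the binomial c.d.f.\ is required, and the proof fits comfortably in a few lines.
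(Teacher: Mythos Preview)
Your proposal is correct and matches the paper's own argument: the paper simply states that the result ``follows directly from Lemma~\ref{lem:gN},'' and you have spelled out exactly why---namely that the inverse of the strictly decreasing map $g_N$ is itself strictly decreasing, so $p^*_{N,\alpha}=g_N^{-1}(\alpha)$ decreases in $\alpha$ and hence increases in $(1-\alpha)$. There is nothing to add or correct.
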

The proof follows directly from Lemma~\ref{lem:gN}. This proposition confirms the intuitive property that a counterfactual explanation algorithm that is robust with a robustness level $(1-\alpha)$ is also robust at a robustness level lower than $(1-\alpha)$. We now study how increasing the number of base learners impacts robustness. We will need the following lemma.
\begin{lemma}
    \label{lem:p_var_with_N}
    Let $m \in \mathbb{N}$, the following relationships hold:
    \begin{itemize}[topsep=-0.9\parskip,
                    noitemsep, wide=5pt]
        \item[] (a) \ $\forall \alpha \le 1/2, \,  p_{2m+3, \alpha}^* \le p_{2m+1, \alpha}^*$,
        \item[] (b) \ $\forall \alpha \le 1/2, \, p_{2(m+1), \alpha}^* \le p_{2m, \alpha}^*$,
        \item[] (c) \ $\forall \alpha, \, p_{2m+1, \alpha}^* \le p_{2m, \alpha}^*$.
    \end{itemize}
\end{lemma}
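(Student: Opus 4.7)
My plan is to reduce each monotonicity statement to a pointwise comparison of the decreasing maps $g_N$ from Lemma~\ref{lem:gN}. The key reduction is the following: if $g_{N'}(p) \le g_N(p)$ at the point $p = p^*_{N, \alpha}$, then since $g_{N'}$ is strictly decreasing and $g_{N'}(p^*_{N', \alpha}) = \alpha = g_N(p^*_{N, \alpha}) \ge g_{N'}(p^*_{N, \alpha})$, we immediately conclude $p^*_{N', \alpha} \le p^*_{N, \alpha}$. All three comparisons will follow from a coupling: I write a binomial of larger size as an independent sum of a binomial of the smaller size and a small extra piece.

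For part (c), couple $\Binom(2m+1, p) = X + Y$ with $X \sim \Binom(2m, p)$ and $Y \sim \Ber(p)$ independent. Conditioning on $Y$ expresses $g_{2m+1}(p)$ as a convex combination of $\Prob(X \le m)$ and $\Prob(X \le m-1)$, both of which are at most $g_{2m}(p) = \Prob(X \le m)$. Thus $g_{2m+1}(p) \le g_{2m}(p)$ for every $p \in [0,1]$, and (c) follows with no restriction on $\alpha$.

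For parts (a) and (b), I couple with $Z \sim \Binom(2, p)$ independent of $X \sim \Binom(N, p)$. Setting $k = \lfloor N/2 \rfloor$ so that $\lfloor (N+2)/2 \rfloor = k+1$, conditioning on $Z \in \{0,1,2\}$ and telescoping the c.d.f.\ differences via $\Prob(X \le k \pm 1) = \Prob(X \le k) \pm \Prob(X = k \pm 1)$ collapses everything to
\begin{equation*}
g_{N+2}(p) - g_N(p) \;=\; (1-p)^2 \Prob(X = k+1) \;-\; p^2 \Prob(X = k).
\end{equation*}
Substituting the binomial pmf and simplifying the binomial coefficients then yields an explicit threshold on $p$. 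For $N = 2m+1$, the symmetry $\binom{2m+1}{m+1} = \binom{2m+1}{m}$ reduces the sign to that of $1 - 2p$, so $g_{2m+3}(p) \le g_{2m+1}(p)$ exactly when $p \ge 1/2$. For $N = 2m$, the identity $\binom{2m}{m+1} = \frac{m}{m+1}\binom{2m}{m}$ reduces the criterion to $p \ge m/(2m+1)$.

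It remains to check that the hypothesis $\alpha \le 1/2$ places $p^*_{N, \alpha}$ above the relevant threshold. For odd $N$, the binomial pmf is symmetric around $N/2$, so $g_N(1/2) = 1/2$ exactly and $\alpha \le 1/2 \iff p^*_{N, \alpha} \ge 1/2$, matching the criterion for (a). For even $N = 2m$, the analogous symmetry gives $g_{2m}(1/2) = \tfrac{1}{2} + \binom{2m}{m}/2^{2m+1} > 1/2$, and since $g_{2m}$ is decreasing and $m/(2m+1) < 1/2$, we deduce $g_{2m}(m/(2m+1)) > 1/2 \ge \alpha$, forcing $p^*_{2m, \alpha} \ge m/(2m+1)$ as required in (b). The main subtlety I anticipate is the indexing bookkeeping when the median threshold shifts from $k$ to $k+1$ in passing from $N$ to $N+2$; this is handled cleanly by the telescoping identity above, and every remaining step is elementary binomial algebra.
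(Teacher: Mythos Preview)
Your proposal is correct and follows essentially the same route as the paper: both arguments couple $\Binom(N',p)$ as $\Binom(N,p)$ plus independent extra Bernoulli trials, compute $g_{N'}(p)-g_N(p)$ explicitly, read off the sign-change threshold in $p$, and then use that $\alpha\le 1/2$ forces $p^*_{N,\alpha}\ge 1/2$ (resp.\ $\ge m/(2m+1)$) via the monotonicity of $g_N$. Your unified formula $g_{N+2}(p)-g_N(p)=(1-p)^2\Prob(X=k+1)-p^2\Prob(X=k)$ is a slightly tidier packaging of what the paper does in two separate calculations, but the underlying ideas and the verification that $p^*_{N,\alpha}$ lies above the threshold are the same.
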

The proof is based on studying the variations of $B(k;N,p)$ with fixed $p$ and is given in Appendix~\ref{app:proof_of_p_var_with_N}.
\begin{proposition}
    \label{prop:nSens}
    A counterfactual algorithm robust to algorithmic uncertainty for an ensemble of size $N$ with $N$ even is also robust for any ensemble of larger size.
\end{proposition}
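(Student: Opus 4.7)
The plan is to reduce the proposition to the monotonicity of the robustness threshold $\pStar$ in $N$ when the starting size is even, and then derive that monotonicity by chaining parts~(b) and~(c) of Lemma~\ref{lem:p_var_with_N}.

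First, by Proposition~\ref{th:fromForestToTree}, the deterministic condition $\Prob_\xi(t(x,\xi)=1) \ge p^*_{M,\alpha}$ is sufficient for algorithmic robustness at tolerance $\alpha$ for ensembles of size $M$. The robust counterfactual algorithm studied in the paper enforces exactly this condition at the chosen size $N$ (through its surrogate form $h_N^0(\hat x;\bm{\xi}^0) \ge \tau(N,\alpha)$ from Equation~\eqref{eq:robCe}), so the produced explanation $\hat x$ satisfies $p(\hat x) := \Prob_\xi(t(\hat x,\xi)=1) \ge p^*_{N,\alpha}$. Hence it suffices to show that, for even $N$ and every $M > N$, $p^*_{M,\alpha} \le p^*_{N,\alpha}$; combined with $p(\hat x) \ge p^*_{N,\alpha}$, this yields $p(\hat x) \ge p^*_{M,\alpha}$, and robustness for size-$M$ ensembles follows again from Proposition~\ref{th:fromForestToTree}.

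Second, writing $N = 2m$, I would split by the parity of $M - N$. When $M = 2m + 2k$ is even, iterating Lemma~\ref{lem:p_var_with_N}(b) yields the telescoping chain $p^*_{2m+2k,\alpha} \le p^*_{2m+2(k-1),\alpha} \le \cdots \le p^*_{2m,\alpha}$. When $M = 2m + 2k + 1$ is odd, Lemma~\ref{lem:p_var_with_N}(c) applied at the even index $2m + 2k$ gives $p^*_{2m+2k+1,\alpha} \le p^*_{2m+2k,\alpha}$, and the even-case chain then pushes the bound further down to $p^*_{2m,\alpha}$. Together these two cases handle every $M > N$.

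The only subtlety is that parts~(a) and~(b) of Lemma~\ref{lem:p_var_with_N} are conditional on $\alpha \le 1/2$, so the proposition should be read in the practically meaningful regime where the robustness target is at least $1/2$. Part~(a) is not actually needed, because (b) and (c) already cover every parity pattern when starting from an even $N$. The real technical obstacle is concentrated in Lemma~\ref{lem:p_var_with_N} itself, whose delicate ingredient is the monotonicity of $B(N/2; N, p)$ in $N$; given that lemma, the present argument reduces to a clean parity-aware chaining.
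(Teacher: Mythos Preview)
Your argument is correct and is exactly what the paper's one-line proof (``follows directly from Lemma~\ref{lem:p_var_with_N}'') intends: chaining parts~(b) and~(c) from the even base $N=2m$ yields $p^*_{M,\alpha}\le p^*_{N,\alpha}$ for every $M>N$, and your side remarks that part~(a) is never invoked and that $\alpha\le 1/2$ is implicitly required are both accurate. The one cosmetic wrinkle is the parenthetical claiming that the surrogate $h_N^0(\hat x;\bm{\xi}^0)\ge\tau(N,\alpha)$ delivers $p(\hat x)\ge p^*_{N,\alpha}$ exactly---that conflates the sample-average approximation with the true base-learner probability, so the cleanest reading of the proposition works directly at the level of the threshold condition $p(\hat x)\ge p^*_{N,\alpha}$, without the SAA detour.
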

The proof follows directly from Lemma~\ref{lem:p_var_with_N}. Proposition~\ref{prop:nSens} generalizes the robustness of counterfactual explanations for an ensemble of size $N$. We illustrate the results of Proposition~\ref{prop:alphaSens} and \ref{prop:nSens} in Figure~\ref{fig:pStarSens}, which shows how the robustness threshold $\smash{p_{N,\alpha}^*}$ varies as a function of the robustness target $(1-\alpha)$ and the ensemble size~$N$.
\begin{figure}[ht!]
    \vspace{-5mm}
    \centering
    \resizebox{0.9\linewidth}{!}{\begin{tikzpicture}
	\begin{groupplot}[
		group style={
			group name=my plots,
			group size=2 by 1,
			xlabels at=edge bottom,
			ylabels at=edge left
		},
		height=5cm,
		width = 7cm,
		ylabel = {Robustness threshold $p_{N, \alpha}^*$},
		legend cell align={left}
		]
		
		\nextgroupplot[title = {(a)}, font = \small,
                        mark size={1},
                	    xmin = 0.5, xmax = 1.0,
                		ymin = 0.5, ymax = 0.675,
		                legend pos= north west,
                		xlabel={Robustness target $(1-\alpha)$},
                		mark repeat = 3,
                		mark phase = 2]
		
		\addplot+[blue, mark=*, mark options={fill=blue}] table [x index = {0}, y index = {1}, col sep=comma]{plots/csv/p_star_sensitivity.csv};
		\addlegendentry{$N=50$}
		
		\addplot+[red, mark=square*, mark options={fill=red}] table [x index = {0}, y index = {2}, col sep=comma]{plots/csv/p_star_sensitivity.csv};
		\addlegendentry{$N=100$}
		
		\addplot+[darkgreen, mark=triangle*, mark options={fill=darkgreen}] table [x index = {0}, y index = {3}, col sep=comma]{plots/csv/p_star_sensitivity.csv};
		\addlegendentry{$N=200$}
		
		\nextgroupplot[title = {(b)}, font = \small,
                        mark size={1},
                	    xmin = 1, xmax = 50,
                	    ymin = 0.2, ymax = 1,
		                legend pos= north east,
			            legend columns=1,
                		xlabel={Ensemble size $N$}]
		\addplot+[blue, mark=*, mark options={fill=blue}] table [x index = {0}, y index = {1}, col sep=comma]{plots/csv/p_star_sensitivity_N.csv};
		\addlegendentry{$\alpha=0.1$}
		\addplot+[red, mark=square*, mark options={fill=red}] table [x index = {0}, y index = {2}, col sep=comma]{plots/csv/p_star_sensitivity_N.csv};
		\addlegendentry{$\alpha=0.25$}
		\addplot+[darkgreen, mark=triangle*, mark options={fill=darkgreen}] table [x index = {0}, y index = {3}, col sep=comma]{plots/csv/p_star_sensitivity_N.csv};
		\addlegendentry{$\alpha=0.5$}
		\addplot+[black, mark=diamond*, mark options={fill=black}] table [x index = {0}, y index = {4}, col sep=comma]{plots/csv/p_star_sensitivity_N.csv};
		\addlegendentry{$\alpha=0.75$}
	\end{groupplot}
\end{tikzpicture}}
    \caption{Sensitivity of the robustness threshold $\pStar$.}
    \label{fig:pStarSens}
    \vspace{-3mm}
\end{figure}

All the above results hold as long as the ensembles are made of base learners trained independently and identically on the fixed data set $z_n$. For instance, they apply to random forests, deep ensembles, or any ensemble built using the bootstrap aggregating procedure. Another observation is that $\alpha = 1/2$ recovers the naive condition presented in Problem~\eqref{opt:ce} when $N$ is odd. Thus, explanations satisfying exactly the naive constraint \eqref{opt:ce:target} have an average validity of only $50\%$ when $N$ is odd (regardless of the ensemble size!), or when $N$ is even and the ensemble size is large as illustrated in Figure~\ref{fig:pStarSens}. Therefore, even if the ensemble size is large, naive algorithms should fail to provide explanations robust to the noise of the training procedure.

\subsection{Sample-Average Approximations}

The robustness condition in Proposition~\ref{th:fromForestToTree} remains a probabilistic constraint and, as such, cannot be directly integrated into any solution algorithm. Further, the probability $\Prob_{\xi} \left( t(x, \xi) = 1 \right)$ that a base learner trained on $z_n$ predicts the target class at $x$ is unknown. We now present two methods to approximate the probability $\Prob_{\xi} \left( t(x, \xi) = 1 \right)$ that use only a given trained ensemble $T^0$.

\textbf{Background.} Optimization problems with a probabilistic constraint such as Equation~\eqref{eq:robCe} belong to the class of chance-constrained problems. These problems have been studied extensively in the stochastic optimization literature since the seminal work of \cite{Charnes1963}. In particular, sampling approaches, such as the sample-average approximation (SAA), are well-known techniques to solve chance-constrained problems with strong theoretical foundations and good empirical results \citep{Pagnoncelli2009}.

To approximate the probability $\Prob_{\xi} \left( t(x, \xi) = 1 \right)$, a possible sampling-based approach is to repeatedly train base learners on $z_n$ and evaluate them for a given observation $x$. Yet, in counterfactual explanation applications, we already have "sampled" observations of the base learner: given the trained ensemble $T^0$, the base learners $\{t^0(x, \xi_i^0)\}_{i=1}^N$ are i.i.d. observations of $t(x, \xi)$. Thus, we do not need to train any additional learners. This is very desirable in practice since it means that robust explanations can be obtained without having access to the training data when a trained ensemble is available.

\textbf{Direct SAA.} Given the trained learners $\{t^0(x, \xi_i^0)\}_{i=1}^N$, the Direct SAA of the probabilistic robustness constraint is:
\begin{equation}
    \label{eq:direct-SAA}
    h_N^0(x, \bm{\xi}^0) \ge \pStar.
\end{equation}
Thus, we recover the simple condition presented in Equation~\eqref{eq:robCe} by taking $\tau(N, \alpha) = \pStar$. Note also that, when $\pStar = 1/2$, we recover the naive condition of Problem~\eqref{opt:ce}.

Sample-average approximations of chance constraints have been shown to give good performance in numerous applications. However, the Direct SAA does not guarantee that the probabilistic constraint is satisfied in general with a finite ensemble size $N$. Hence, we provide a more robust approach, that still uses only the initial ensemble and does not need to train any additional learners. 

\textbf{Robust SAA.} This approach is motivated by a statistical perspective: the score function of a given ensemble $h^0_N$ can be seen as an estimator of the probability that a single learner predicts the target class. Indeed, when binomial samples are observed i.i.d, the sample mean is an unbiased minimum-variance estimator of the success rate $p$ of the underlying Bernoulli distribution. To hedge against the noise of this estimator when the ensemble size is finite, we introduce the Robust SAA based on building a confidence interval around the estimated threshold $\pStar$ as if it were estimated from i.i.d. observations.

Confidence intervals of the success rate of binomial distributions have been studied extensively. In particular, the Agresti-Coull (AC) confidence intervals \citep{Agresti1998} achieve good coverage of the true success rate in finite samples \citep{Brown2001}. The Robust SAA thus uses the threshold:
\begin{equation}
    \label{eq:robust-SAA}
    h_N^0(x, \bm{\xi}^0) \ge \rho_{N, \alpha, \beta}^*,
\end{equation}
where $\rho_{N, \alpha, \beta}^* = \pStar + z_{\beta} \sqrt{\rho_{AC}(1-\rho_{AC})/N}$ with $z_{\beta}$ being the quantile of the standard normal distribution at $\beta/2$, and $\rho_{AC}= (N \cdot \pStar + 2)/(N+4)$. The confidence level $\beta \in [0, 1]$ is a hyperparameter that adjusts the conservativeness of the solution. As $\beta$ increases, the robustness of the counterfactual increases and so does the distance to the initial observation. Thus, $\beta$ depends on the data set and needs to be tuned according to the desired robustness level. This approach is always more conservative than the Direct SAA since $\beta=0$ recovers the Direct SAA.
\section{Robustness Guarantees for Ensembles of Convex Learners}
\label{sec:stumpRob}

Our approach can be supported by statistical guarantees by leveraging the theory on sample-average approximations developed in the stochastic optimization literature. In particular, when the ensemble is made of convex base learners, the Direct SAA is asymptotically consistent.

\begin{proposition}[Asymptotic consistency]
    \label{prop:asympt}
    As the size of an ensemble of convex base learners increases, the solution of the Direct SAA method converges almost surely to the minimum-cost robust counterfactual explanation.
\end{proposition}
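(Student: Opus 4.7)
The plan is to recast this as a standard sample-average-approximation (SAA) consistency result for chance-constrained programs, in the spirit of Pagnoncelli et al. (2009). By Proposition~\ref{th:fromForestToTree}, the true minimum-cost robust counterfactual at ensemble size $N$ is the minimizer of $f(\cdot, x_{n+1})$ on $\mathcal{X}^a \cap \mathcal{X}^p$ subject to the deterministic constraint $p(x) := \Prob_{\xi}(t(x;\xi)=1) \ge \pStar$. The Direct SAA is exactly the empirical analogue: it replaces $p(x)$ by the sample average $h_N^0(x;\bm{\xi}^0) = \frac{1}{N}\sum_{i=1}^N t(x;\xi_i^0)$ built from the $N$ i.i.d.\ base learners of the given ensemble, while leaving the same threshold $\pStar$. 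Since the threshold is identical in both problems, no bias in the quantile estimate needs to be controlled.

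Next, I would establish the stochastic convergence of the constraint function. For each fixed $x$, the strong law of large numbers yields $h_N^0(x;\bm{\xi}^0) \to p(x)$ almost surely. The convex-base-learner hypothesis is then used to promote pointwise to uniform convergence on the feasible region: each base learner $t(\cdot;\xi)$ is the indicator of a (random) convex set $A_\xi \subseteq \mathcal{X}$, and the class of indicators of convex sets in $\mathbb{R}^d$ is a Glivenko--Cantelli class (it has finite VC dimension). Hence
\begin{equation}
\sup_{x \in \mathcal{X}^a \cap \mathcal{X}^p} \bigl| h_N^0(x;\bm{\xi}^0) - p(x) \bigr| \longrightarrow 0 \quad \text{a.s.}
\end{equation}

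With uniform convergence in hand, I would invoke a textbook SAA consistency theorem (e.g., Shapiro--Dentcheva--Ruszczyński, Thm.~5.3 on consistency of SAA optimizers): assuming $f(\cdot, x_{n+1})$ is continuous, $\mathcal{X}^a \cap \mathcal{X}^p$ is closed, and the true feasible set $\{x \in \mathcal{X}^a \cap \mathcal{X}^p : p(x) \ge \pStar\}$ satisfies a Slater-type regularity condition at the optimum, uniform convergence of the constraint function plus lower semicontinuity of the set-valued feasible map implies almost-sure convergence of SAA optimizers to the true robust optimizer. Convexity of the $A_\xi$'s also ensures that $p$ enjoys enough regularity (in particular no isolated bumps) for the boundary of the true feasible set to be approached continuously by the SAA feasible sets.

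The main obstacle is the last ingredient: the indicator functions $t(\cdot;\xi)$ are discontinuous at the boundary of $A_\xi$, so one must rule out the degenerate case in which the true optimum lies on the constraint boundary $\{p(x) = \pStar\}$ in a way that makes the SAA feasible set oscillate around it. This is the standard Slater/constraint-qualification caveat in SAA theory and is resolved by either assuming a strictly feasible interior point of the robust region (which is natural whenever $\mathcal{X}^a \cap \mathcal{X}^p$ contains an observation classified by every base learner as target) or by observing that, generically in $x_{n+1}$, the cost-optimal robust counterfactual lies strictly in the interior.
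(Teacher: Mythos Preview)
Your overall route matches the paper's almost exactly: the paper also observes that convexity of each base learner makes the event $\{t(x;\xi)=1\}$ a convex constraint in $x$ and then simply invokes Chapter~5 of Shapiro--Dentcheva--Ruszczy\'nski for SAA consistency of chance-constrained programs. The reduction to standard SAA theory, and the final citation, are the same in both arguments.

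Where your version goes wrong is the Glivenko--Cantelli step. First, the class of indicators of convex sets in $\mathbb{R}^d$ does \emph{not} have finite VC dimension once $d\ge 2$: any finite set of points in general position is shattered by taking convex hulls of subsets. Second, even if it did, that would not deliver the uniform bound you display. A Glivenko--Cantelli statement for convex sets controls $\sup_C |\hat P_N(C)-P(C)|$ where the empirical measure sits on i.i.d.\ \emph{points} and the supremum runs over convex sets $C$; here the roles are reversed---the i.i.d.\ objects are random convex sets $A_{\xi_1},\dots,A_{\xi_N}$ and you need uniformity in the \emph{point} $x$ of $\tfrac{1}{N}\sum_i t(x;\xi_i)$. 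The relevant function class lives on the $\xi$-space and its complexity has nothing a priori to do with the VC dimension of convex subsets of $\mathcal{X}$.

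This detour is in any case unnecessary: the consistency theorems in Shapiro et al.\ (and their chance-constraint specialisation in Pagnoncelli et al.) do not argue via a uniform LLN of the indicator over $x$; convexity of $\{x:t(x;\xi)=1\}$ enters through set-convergence arguments for the feasible region. You can therefore drop the VC paragraph and appeal directly to those theorems, exactly as the paper does. Your closing discussion of the Slater-type constraint qualification is apt---it is precisely the boundary regularity those theorems require---though the paper leaves that caveat implicit.
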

When the base learners are convex functions of the features, the condition $t(x;\xi) = 1$ inside the probabilistic constraint of Proposition~\ref{th:fromForestToTree} is also convex. Hence, the proof follows from \cite[Chapter 5]{Shapiro2014}. Proposition~\ref{prop:asympt} holds for several classes of randomized ensembles. By definition, it holds for ensembles made of input-convex neural networks \citep{Amos2017}.

We can show that the result also holds for random forests made of decision trees that use a single decision split (also called stumps). These simplified forests have been studied for instance by \cite{Buhlmann2002} to show how bagging reduces variance in random forests. The class prediction of tree stumps can be formulated as a convex constraint for any realization of the uncertain training procedure. Let $\mathcal{X} \subseteq [0, 1]^d$ and $t(x, \xi)$ be a decision stump. The robustness constraint can be expressed as a convex constraint as:
\begin{equation}
    \label{eq:stumpRob}
    \Prob (t(x, \xi) = 1) = \Prob \left(A(\xi)^{\top}x - b(\xi) \le 0 \right),
\end{equation}
where $A(\xi) \in \{-1, 0, 1\}^d$ and $b(\xi) \in [-1, 1]$. The vector $A(\xi)$ is such that $a_j(\xi) = 0$ if the stump does not split on feature $j$, $a_j(\xi) = 1$ if it splits on feature $j$ and the left leaf node has class $1$ and $a_j(\xi) = -1$ if it splits on feature $j$ and the right leaf node has class $1$. The split threshold $b(\xi)$ is positive if the left leaf node has class $1$ and negative otherwise. Thus, the function on the left-hand side of the probabilistic Constraint~\eqref{eq:stumpRob} is convex in $x$.

These results on asymptotic consistency are notable since they hold for a wide variety of randomized ensembles. They do not require any simplification of the training procedure of the base learners, as is common for instance when studying the theoretical properties of random forests \citep{Biau2016}. Finally, it is interesting to observe that increasing the size of the ensemble has two effects: (1)~it decreases the robustness threshold $\pStar$ as shown in Lemma~\ref{lem:p_var_with_N}, and (2)~it increases the accuracy of the Direct-SAA method according to Proposition~\ref{prop:asympt}.

\textbf{Finite-sample guarantees.} In certain cases, it is not possible to train additional learners when determining counterfactual explanations, such as when only the initial ensemble $T^0$ is available. In this case, we are interested in finite-sample guarantees on the robustness of explanations. Finite-sample bounds on the quality of the Direct-SAA method are given by \cite{Luedtke2008}, but require stringent assumptions on the feature space $\mathcal{X}$ and do not hold if the feature vector contains both continuous and discrete features. We present finite-sample bounds based on a second approximation technique called the convex approximation.

The convex approximation of the probabilistic robustness condition in Equation~\eqref{eq:probCons} results in the following set of constraints:
\begin{equation}
    \label{eq:convexApprox}
    t^0(x, \xi_i^0) = 1, \forall i \in \{1, \dots, N\}.
\end{equation}
Thus, the convex approximation is equivalent to taking $\tau(N, \alpha) = 1$ in Equation~\eqref{eq:robCe}. Finite-sample bounds on the probability of finding a robust solution by solving the convex approximated model can be obtained. \cite{Campi2008} provide a key result when the feasible set $\mathcal{X}$ is convex (for instance, when all features are continuous). \cite{DeLoera2018} generalize it to decision variables that take both continuous and discrete values. We can apply the latter result to obtain the following bound.
\begin{proposition}[Finite-sample guarantees]
    Given an ensemble of convex learners of size $N$ and a feature vector $x$ with $k$ continuous features and $d-k$ discrete features, if a solution to the convex approximated problem exists, the probability that it is a robust counterfactual explanation is at least (1-$\delta$) with $\delta =\exp \left[\left(2^{d-k}(k+1)-1 \right)\left(\log(1/\alpha) + \alpha\right) - (\alpha/2)N \right]$.
\end{proposition}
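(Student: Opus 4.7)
The plan is to recast the convex-approximated problem as a scenario-based approximation of the chance-constrained Problem~\eqref{opt:ce} augmented with constraint~\eqref{eq:probCons}. Since the base learners of $T^0$ are i.i.d.\ realizations of the randomized training procedure on the fixed set $z_n$, each constraint $t^0(x,\xi_i^0)=1$ in~\eqref{eq:convexApprox} is an independent realization of the generic constraint $t(x,\xi)=1$, so the convex-approximated feasible region is exactly the random intersection of $N$ i.i.d.\ sampled feasible sets. Consequently, the quantity we need to bound is the probability that the scenario optimum violates the underlying chance constraint, which by Proposition~\ref{th:fromForestToTree} coincides with the failure of the robust explanation property.

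The next step is to invoke the mixed-integer scenario bound of \cite{DeLoera2018}. Because each base learner is convex in the features, for every realization $\xi_i^0$ the acceptance region $\{x\in\mathcal{X} : t^0(x,\xi_i^0)=1\}$ is convex in the $k$ continuous coordinates and arbitrary in the $d-k$ discrete ones. This places the problem in the mixed-integer convex scenario-optimization setting of \cite{DeLoera2018}, whose Helly-type theorem replaces the classical continuous Helly dimension $k+1$ by the mixed-integer value $h=2^{d-k}(k+1)$. Their main result then yields the violation-probability bound
\begin{equation*}
    \sum_{i=0}^{h-1}\binom{N}{i}\alpha^{i}(1-\alpha)^{N-i}.
\end{equation*}

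It remains to bound this binomial lower tail by the closed-form $\delta$ claimed in the proposition. A standard route retains the dominant term $\binom{N}{h-1}\alpha^{h-1}(1-\alpha)^{N-h+1}$ (with a factor $h$ absorbing the sum length), uses $\binom{N}{h-1}\le (N/\alpha)^{h-1}$-style estimates together with $(1-\alpha)^{N-h+1}\le e^{-\alpha(N-h+1)}$, and collects the surviving constants to produce the exponent $(h-1)(\log(1/\alpha)+\alpha) - \alpha N/2$. The main obstacle is not the final algebra but the verification that the hypotheses of \cite{DeLoera2018} transfer cleanly to our setting: in particular, checking that the convexity of each base learner's acceptance region in the continuous coordinates is enough to invoke their mixed-integer Helly bound, and that the mild non-degeneracy / uniqueness conditions on the scenario optimum can be discharged by standard tie-breaking in the counterfactual formulation. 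Once these are established, matching constants to recover exactly the stated expression for $\delta$ is routine.
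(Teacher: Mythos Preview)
Your proposal is correct and follows essentially the same route as the paper: both reduce the claim to a direct application of the mixed-integer scenario bound of \cite{DeLoera2018}, using the convexity of each base learner's acceptance region in the continuous coordinates and the mixed-integer Helly number $h=2^{d-k}(k+1)$. The paper's own proof is a one-sentence citation, so your elaboration on the scenario-optimization framing and the tail-bound simplification goes beyond what the authors provide but is entirely consistent with their argument.
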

The proof follows directly from \cite{DeLoera2018} when the base learners are convex in $x$. Note that $\delta$ decreases exponentially as the number of trees increases.
\section{Experimental Results}
\label{sec:numStudy}
We conduct extensive experiments to (i)~evaluate the robustness of our proposed approaches, (ii)~understand why counterfactual explanations have varying robustness on different data sets, and (iii)~demonstrate that our methods provide the best trade-off between robustness and distance. We focus our experiments on random forests, which are arguably the most common randomized ensembles used in practice and remain one of the state-of-the-art classifiers for tabular data \citep{Grinsztajn2022, Biau2016}. We generate counterfactual explanations using the state-of-the-art formulation of \cite{Parmentier2021} based on integer programming. An integer programming method has the advantage of being exact: it returns the counterfactual explanation with minimal distance to the original observation. Hence, it eliminates confounding factors in the analyses due to the choice of a specific heuristic solution.

The simulations are implemented in Python 3.9 using scikit-learn v.1.0.2 to train the random forests. Gurobi 9.5 is used to solve all integer programming models. All experiments are run on an Intel(R) Core(TM) i7-11800H processor at $2.30$Ghz using $16$GB of RAM. The data sets have been pre-processed following \cite{Parmentier2021} to ignore missing values and to take into account feature actionability. A summary of the data sets considered is provided in Appendix~\ref{app:datasets}. The code to reproduce all results and figures in this paper is publicly available at the online repository \url{https://github.com/alexforel/RobustCF4RF} under an MIT license.

\textbf{Simulation setting.} We use the standard procedure of scikit-learn to generate random forests of $N=100$ trees (default value of scikit-learn) with a maximum depth of $4$. In each simulation, five samples are selected randomly to serve as new observations for which to derive counterfactuals. A first forest $T^0$ is trained on the remaining $(n-5)$ points. A second forest $T^1$ is trained on the same data to asses if the counterfactual explanations are valid. We repeat this procedure forty times.

We implement the Direct-SAA and Robust-SAA methods with $\beta \in \{0.05, 0.1\}$ and vary the tolerance parameter $\alpha$ between $0.5$ and $0.01$. The key performance indicators are the distance between the initial observation and the counterfactual and their validity, measured as the percentage of counterfactuals that are valid to the test classifier. All distances are measured following a feature-weighted $l_1$ norm. We use the $l_1$ norm since it tends to create sparse explanations, that is, explanations with a limited number of changed features. To balance the cost of feature changes between continuous and non-continuous features, we reduce the weight of changes on non-continuous features by a factor~$1/4$.

\textbf{Benchmarks.}
Existing works on counterfactual explanations do not study the algorithmic uncertainty caused by a randomized training procedure. Hence, there is no directly related benchmark. The closest approach from the existing literature to our setting is arguably the one from \cite{Dutta2022}. Their algorithm (RobX) aims to find explanations of tree ensembles that are robust to evolving data sets and hyperparameters. Even though they study a different type of robustness, we evaluate the robustness of their explanations to algorithmic uncertainty. RobX is based on iteratively perturbating a naive explanation by moving it toward a neighbor with high stability, in the sense that the prediction model is locally constant. Since it is unclear how this method can be applied with binary, discrete, or categorical features, we only implement it for the \textsc{Spambase} dataset, which has only continuous features.

We further include three benchmarks: (1)~the naive approach that uses the constraint $h_N(x) \ge 1/2$, (2)~a plausibility-based benchmark that uses isolation forests \citep{Parmentier2021, Liu2008}, and (3)~a plausibility-based benchmark that uses the local outlier factor as \cite{Kanamori2020}. Plausibility methods encourage the explanation to be close to the training data distribution. We include these benchmarks to investigate whether producing explanations that are close to the data distribution is sufficient to ensure robustness to algorithmic uncertainty. The contamination parameter of the isolation forest method is varied as $c \in \{0.05, 0.1, 0.2, 0.3, 0.4, 0.5\}$. Similarly, the weight of the local outlier factor penalty term is varied as $\lambda \in \{1e^{-3}, 1e^{-2}, 1e^{-1}, 1, 1e^{1}, 1e^{2}\}$ following \cite{Kanamori2020}. These two hyperparameters control the degree to which the obtained explanations are close to the training data. Details on the implementation of our methods and benchmarks are provided in Appendix~\ref{app:implementation}, which also shows the computation time of the different methods.

\textbf{Example of explanations with increasing robustness.}
We illustrate the generated counterfactual explanations with varying robustness targets for the \textsc{German Credit} data set in Figure~\ref{fig:cfTraj}. The feature values of an initial observation $x_{n+1}$ are shown on the bottom row as a heatmap. Each row then shows the changes to the initial feature values as the target robustness level $(1-\alpha)$ increases. Positive changes to a feature are shown in blue and negative changes are shown in red. In this example, the ''Age'' feature can only increase. The row with $(1-\alpha)=0.5$ thus shows a naive explanation, whereas the top row with $(1-\alpha) = 0.99$ shows an explanation with high robustness.

Figure~\ref{fig:cfTraj} illustrates how the number and magnitude of feature changes vary as $(1-\alpha)$ increases. It shows that a user can obtain a robust explanation with only a small subset of features changed. Additional examples are presented in Appendix~\ref{app:cfTraj}.
\begin{figure}[ht]
    \centering
    \vspace{-2mm}
    \includegraphics[width=0.55\linewidth]{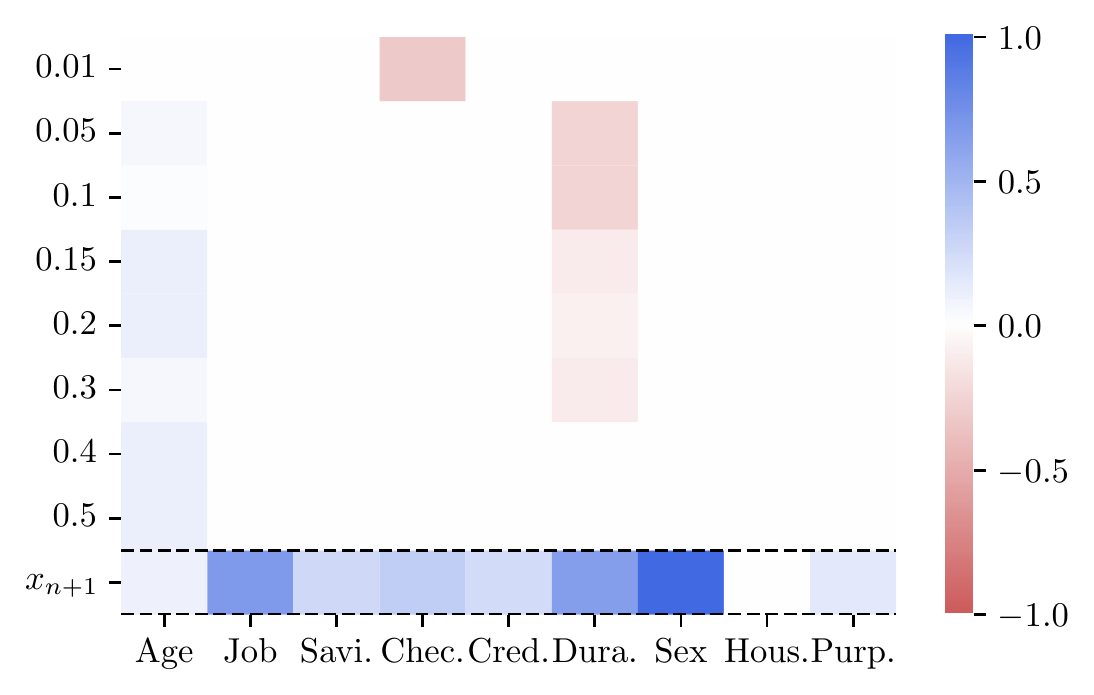}
    \caption{Initial observation and counterfactual explanations for increasing robustness target ($1-\alpha$).}
    \label{fig:cfTraj}
    \vspace{-5mm}
\end{figure}

\subsection{Achieving Robust Counterfactual Explanations}
\label{sec:robExp}

This section provides our main experimental results. Additional experiments, using larger random forests or an evolving data set, are presented in Appendix~\ref{app:addExp}.

\textbf{Naive explanations.} First, we measure the robustness of naive explanations that ignore algorithmic uncertainty. We present the average validity of the counterfactual explanations in Table~\ref{tab:naive}. Although it depends on the data set, naive explanations are clearly not robust to algorithmic uncertainty. The average validity even falls below $20\%$ for the \textsc{Spambase} dataset.
\begin{table}[ht!]
    \vspace{-4mm}
    \center
    \caption{Average naive explanations that remain valid when retraining the ensemble with fixed training data.}
    \label{tab:naive}
    \begin{tabular}{lcccccccc}
    \toprule
    \textbf{Data set} &   A  &  C  &   CC  &  GC  &  ON  &  P  &  S   &   SP \\
    \midrule
    Validity [in \%]  &  62 &  92 &  27 &  35 &  32 &  80 &  17 &  39 \\
    \bottomrule
\end{tabular}
    \vspace{-4mm}
\end{table}

\textbf{Algorithmic robustness.} The validity of the counterfactual explanations generated by the Direct- and Robust-SAA methods is shown in Figure~\ref{fig:algRobResults-validity} for varying robustness target levels. The confidence interval of the average validity at the $0.05$ level is shown as a shaded area. Figure~\ref{fig:algRobResults-validity} shows that the Direct-SAA method provides robust counterfactual explanations on all but one data sets. Counterfactual explanations with high robustness can already be found for small robustness targets in two data sets: \textsc{Compas} and \textsc{Phishing}. Conversely, the data set \textsc{Spambase} with $d=57$ continuous features proves the most difficult. It is only on this data set that the Robust-SAA method is required, yielding robust counterfactuals for moderate and high robustness levels with $\beta=0.1$ and $\beta=0.05$, respectively.
\begin{figure*}[th]
    \vspace{-1mm}
    \centering
    \resizebox{\linewidth}{!}{\input{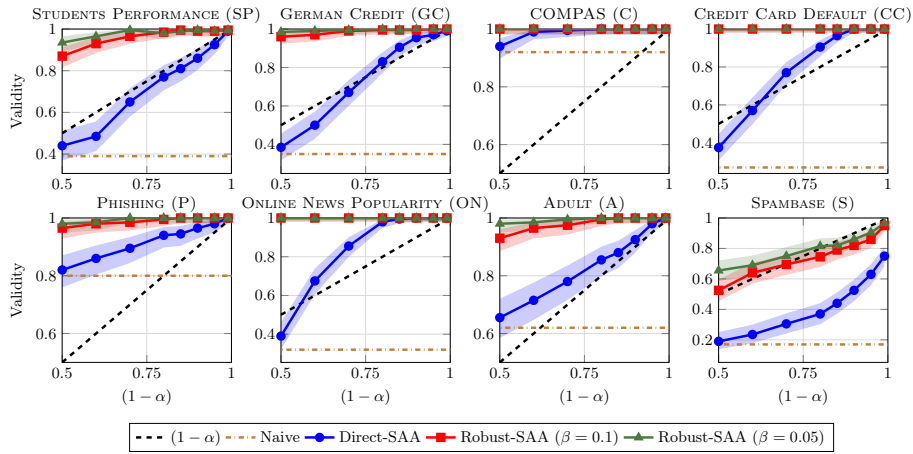}}
    \caption{Validity of robust counterfactuals as a function of the robustness target $(1-\alpha)$.}
    \label{fig:algRobResults-validity}
    \vspace{-3mm}
\end{figure*}

The same experiment is performed with random forests of size $N=400$. The results, presented in Appendix~\ref{app:largeForests}, are essentially identical to those in Figure~\ref{fig:algRobResults-validity}. This confirms that increasing the size of the ensemble does not protect against algorithmic uncertainty if the counterfactual explanation threshold is not chosen appropriately. We also perform experiments when additional data is observed before re-training the random forest. This experiment is presented in Appendix~\ref{app:evolvingData}. Again, the results are very similar to those in Figure~\ref{fig:algRobResults-validity} and our approaches provide robust explanations on all data sets. This suggests that, for randomized ensembles, the algorithmic uncertainty due to the random training procedure is more critical for the robustness of explanations than the uncertainty caused by observing additional data samples.

\textbf{Trade-off between distance and robustness.} The distance and validity of the counterfactual explanations obtained using our methods and benchmarks are shown in Figure~\ref{fig:pareto} as a Pareto front. The results show that our methods strictly dominate the plausibility-based benchmarks by consistently providing more robust explanations with lower distance. This means that plausibility and robustness are independent in practice. Even with high contamination parameters, isolation forests do not provide robust explanations although they substantially increase counterfactual distance. Increasing the penalty factor of the local outlier factor (1-LOF) method slightly increases robustness but leads to significantly more distant counterfactual explanations than the ones generated by our robust methods. The low robustness of the naive and plausibility-based benchmarks suggests their counterfactuals are mostly fitting the noise of the random training procedure. The RobX benchmark also does not provide robust explanations. This suggests that the algorithmic uncertainty of model retraining is distinct from the uncertainty of an evolving data set, which is the focus of RobX. Thus, these methods provide neither robust algorithmic recourse nor meaningful explanations of the random forest.
\begin{figure*}[th]
    \vspace{-1mm}
    \centering
    \resizebox{\linewidth}{!}{\input{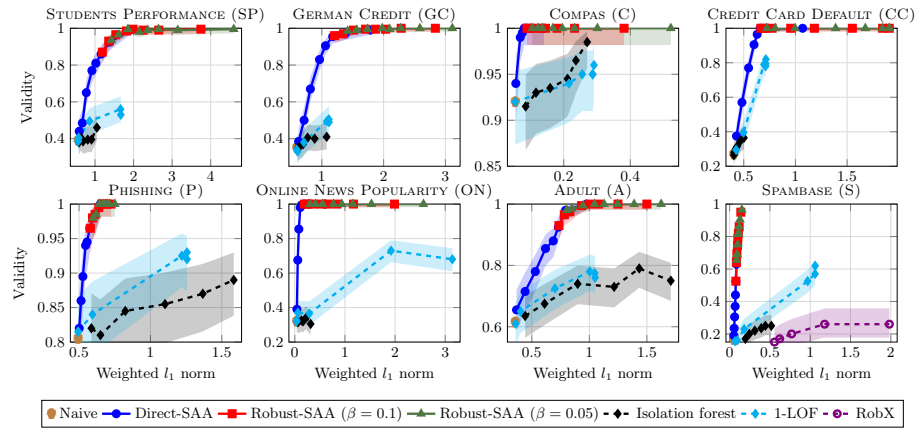}}
    \caption{Trade-off between the distance and robustness of counterfactual explanations.}
    \label{fig:pareto}
    \vspace{-3mm}
\end{figure*}

Figure~\ref{fig:pareto} also highlights that choosing a value for $\alpha$ depends on the preference of the decision-maker. Robust counterfactuals provide more insightful explanations but may be more difficult to act upon due to the increase in the average distance. Still, our results show that a large increase in robustness can be obtained for only a small increase in the counterfactual distance. A satisfying compromise can be found for instance using $\alpha = 0.2$.

\subsection{Feature Importance and Robustness}
To explain the varying robustness achieved on the different data sets, we analyze two aspects: (i)~the average number of features changed in counterfactual explanations, and (ii)~the link between the changed features and their predictive importance. Figure~\ref{fig:featureChanges} shows the average number of changed features as a function of the robustness target level. Data sets that have a few key features with high importance tend to exhibit sparse and robust counterfactual explanations. This is especially true if the features are discrete or categorical as is the case for the \textsc{Compas} data set for instance. This sparsity leads to robust counterfactual explanations even when the robustness target level is low. On the contrary, data sets that have many important features have low inherent robustness, especially when these are continuous features as in the case of the \textsc{Spambase} dataset.
\begin{figure}[ht]
    \vspace{-3mm}
    \centering
    \resizebox{0.6\linewidth}{!}{\begin{tikzpicture}
	\begin{axis}[
		height=4.5cm,
		width = 8.5cm,
		xmin = 0.5, xmax = 1,
		ymin = 1, ymax = 15,
		xlabel={Robustness level $(1-\alpha)$ [in $\%$]},
		ylabel = {Nb. of features changed},
		legend cell align={left},
		ytick = {1, 5, 10, 15},
		legend style={at={(1.25,1)},anchor=north east},
		legend columns=1]
		
		\addplot table [x index = {0}, y index = {1}, col sep=comma]{plots/csv/featureChangeWithAlpha.csv};
		\addlegendentry{SP}
		\addplot table [x index = {0}, y index = {2}, col sep=comma]{plots/csv/featureChangeWithAlpha.csv};
		\addlegendentry{GC}
		\addplot+[darkgreen, mark options={fill=darkgreen}] table [x index = {0}, y index = {4}, col sep=comma]{plots/csv/featureChangeWithAlpha.csv};
		\addlegendentry{C}
		\addplot table [x index = {0}, y index = {5}, col sep=comma]{plots/csv/featureChangeWithAlpha.csv};
		\addlegendentry{P}
		\addplot table [x index = {0}, y index = {6}, col sep=comma]{plots/csv/featureChangeWithAlpha.csv};
		\addlegendentry{CC}
		\addplot table [x index = {0}, y index = {7}, col sep=comma]{plots/csv/featureChangeWithAlpha.csv};
		\addlegendentry{ON}
		\addplot+[darkgreen, mark options={fill=darkgreen}] table [x index = {0}, y index = {8}, col sep=comma]{plots/csv/featureChangeWithAlpha.csv};
		\addlegendentry{A}
		\addplot table [x index = {0}, y index = {3}, col sep=comma]{plots/csv/featureChangeWithAlpha.csv};
		\addlegendentry{S}
	\end{axis}
\end{tikzpicture}}
    \caption{Average number of features changed for varying robustness targets $(1-\alpha)$.}
    \label{fig:featureChanges}
    \vspace{-3mm}
\end{figure}
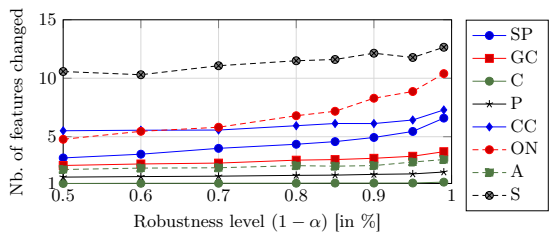

The analysis of feature importance is provided in Appendix~\ref{app:featImportanceAndSparsity}. It shows that feature changes in counterfactual explanations follow largely their predictive importance: highly predictive features have large changes, whereas unimportant features are mostly unchanged.
\section{Conclusions}
\label{sec:conclusion}
This work sheds light on a critical, yet overlooked, aspect of interpretability in machine learning: the lack of robustness of existing counterfactual explanation methods with regard to algorithmic uncertainty. We contribute to building more transparent classifiers by providing an effective and easy-to-implement approach to improve robustness. Our method is supported by a probabilistic analysis of randomized ensembles and its value is demonstrated on real-world data. To help practitioners identify situations in which robust counterfactual explanations are essential, we derive valuable insights linking the robustness of counterfactual explanations and the characteristics of the data sets.

%
%
\bibliographystyle{apalike}
\bibliography{bibliography}

\clearpage
\appendix
\section{Supplementary Material: Proofs}
\subsection{Proof of Lemma~\ref{lem:gN}}
\label{app:gNproof}
The c.d.f. of the binomial distribution can be expressed in terms of the regularized incomplete beta function $I_x(a, b)$ as $g_N(p) = I_{1-p}(N/2, N/2+1)$ (see \cite{Abramowitz1988}, Section 26.5). Since $I_x(a, b)$ is the c.d.f. of the beta distribution, $I_x(a, b)$ is increasing in $x$ and is invertible, so that $g_N(p)$ is decreasing in $p$ and is also invertible. 

\subsection{Proof of Proposition~\ref{th:fromForestToTree}}
\label{app:fromForestToTree}
We can deduce from Lemma~\ref{lem:gN} that $1 - B\left(N/2 ; N, p(x) \right) \ge 1-\alpha$ if and only if $p(x) \ge \pStar$. Further, the following relationship holds for any ensemble of $N$ base learners trained independently on $z_n$:
\begin{align*}
    \Prob_{\bm{\xi}}  ( h_N(x, \bm{\xi}) \ge 1/2 ) & = \Prob_{\bm{\xi}} ( \sum_{i=1}^N t(x, \xi) \ge N/2 )\\
    & = \Prob_{\bm{\xi}} ( \sum_{i=1}^N t(x, \xi) = N/2 ) + \Prob_{\bm{\xi}} ( \sum_{i=1}^N t(x, \xi) > N/2 )\\
    & = \Prob_{\bm{\xi}} ( \sum_{i=1}^N t(x, \xi) = N/2 ) + 1 - B(N/2 ; N, p(x) )\\
    & \ge 1 - B(N/2 ; N, p(x) ).
\end{align*}
When $N$ is odd, $\Prob_{\bm{\xi}} \left( \sum_{i=1}^N t(x, \xi) = N/2 \right) = 0$ and the two conditions are equivalent.

\subsection{Proof of Lemma~\ref{lem:p_var_with_N}}
\label{app:proof_of_p_var_with_N}
Lemma~\ref{lem:p_var_with_N} is obtained by studying the sequence $f_N: N \mapsto B\left(N/2 ; N, p \right)$ with fixed~$p$. Interestingly, this sequence is not monotonic in $N$ and we need to analyze separately the cases of odd and even integers. We will need the following lemma.
\begin{lemma}
    \label{lem:p_N_smaller}
    Given $N \in \mathbb{N}, \, \alpha \le 1/2 \Rightarrow p_{N,\alpha}^* \ge 1/2 .$
\end{lemma}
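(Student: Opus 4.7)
The plan is to reduce the inequality $p_{N,\alpha}^* \ge 1/2$ to a statement about the value of $g_N$ at the point $1/2$, and then exploit the symmetry of the binomial distribution with parameter $p = 1/2$.

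First, I would recall from Lemma~\ref{lem:gN} that $g_N$ is strictly decreasing, so its inverse $g_N^{-1}$ is strictly decreasing as well. Therefore, $p_{N,\alpha}^* = g_N^{-1}(\alpha) \ge 1/2$ is equivalent to $\alpha \le g_N(1/2)$. Since the hypothesis only gives $\alpha \le 1/2$, it suffices to establish the single inequality
\begin{equation*}
    g_N(1/2) \ge 1/2,
\end{equation*}
after which monotonicity of $g_N^{-1}$ finishes the argument.

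To verify this inequality, I would unfold the definition: $g_N(1/2) = B(N/2; N, 1/2) = \Prob(X \le N/2)$ where $X \sim \Binom(N, 1/2)$. The key observation is that $X$ is symmetric about $N/2$, i.e.\ $X$ and $N - X$ have the same distribution. I would split on the parity of $N$. When $N$ is odd, $N/2$ is not an integer, so $\Prob(X \le N/2) = \Prob(X \le (N-1)/2)$, and symmetry gives $\Prob(X \le (N-1)/2) = \Prob(X \ge (N+1)/2)$; since these two events partition the sample space, each has probability exactly $1/2$. When $N$ is even, symmetry yields $\Prob(X < N/2) = \Prob(X > N/2)$, so writing $1 = 2\Prob(X < N/2) + \Prob(X = N/2)$ and adding back the atom gives $\Prob(X \le N/2) = 1/2 + \Prob(X = N/2)/2 \ge 1/2$. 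In both cases $g_N(1/2) \ge 1/2 \ge \alpha$, and the result follows.

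The proof is essentially a one-line reduction plus a textbook symmetry computation, so there is no real obstacle; the only mild subtlety is that the inequality is tight when $N$ is odd, which is consistent with the equivalence noted after Proposition~\ref{th:fromForestToTree} for odd $N$ and reassures that the bound cannot be strengthened in general.
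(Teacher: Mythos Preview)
Your proof is correct and follows essentially the same approach as the paper: both reduce the claim to verifying $B(N/2;N,1/2)\ge 1/2$ via the monotonicity of $g_N$, and then check this by a parity case split using the symmetry of $\Binom(N,1/2)$. Your version is in fact slightly more direct (you go straight from $g_N^{-1}(\alpha)\ge 1/2$ to $\alpha\le g_N(1/2)$, whereas the paper routes through the intermediate point $p^*_{N,1/2}$) and your even-$N$ computation is stated with the correct inequality sign.
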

\begin{proof}
    Given $\alpha \in [0, 1/2]$, we know from Lemma~\ref{lem:gN} that $p_{N,\alpha}^* \ge p_{N,1/2}^*$. We have
    \begin{align*}
        p_{N,1/2}^* \ge 1/2 \iff g_N(p_{N,1/2}^*) \le g_N(1/2) \\
         \iff 1/2 \le B(N/2;N,1/2),
    \end{align*}
    since $g_N$ is decreasing. Notice that the latest condition is true for all $N$ since $B(N/2;N,1/2) = 1/2$ when $N$ is odd and $B(N/2;N,1/2) < 1/2$ when $N$ is even.
\end{proof}
    
\textbf{Proof of Lemma~\ref{lem:p_var_with_N}(a).} We focus first on odd integers. Define $S_N \sim \Binom(N, p)$ and let $m \in \mathbb{N}$. The c.d.f. of the binomial distribution is:
\begin{align*}
    B( m+1 ; 2m+3,& p) =  \Prob (S_{2m+3} \le m+1) \\
    = & \Prob (S_{2m+3} \le m+1 \mid S_{2m+1} \le m-1) \Prob (S_{2m+1} \le m-1)\\
    & \quad + \Prob (S_{2m+3} \le m+1 \mid S_{2m+1} = m) \Prob (S_{2m+1} = m)\\
    & \quad + \Prob (S_{2m+3} \le m+1 \mid S_{2m+1} = m + 1) \Prob (S_{2m+1} = m + 1) \\
    = & \Prob (S_{2m+1} \le m - 1) + \left( 1 - p^2 \right) \Prob (S_{2m+1} = m) \\
    & \quad + (1-p)^2 \Prob (S_{2m+1} = m + 1).
\end{align*}
Notice that
\begin{equation}
    \label{eq:partial_res1}
    \Prob (S_{2m+1} \le m - 1) = \Prob (S_{2m+1} \le m) - \Prob (S_{2m+1} = m)
\end{equation} and, because of the symmetry of the binomial coefficients,
\begin{equation}
    \label{eq:partial_res2}
    (1-p)^2 \Prob (S_{2m+1} = m + 1) = p(1-p) \Prob (S_{2m+1} = m).
\end{equation}
By combining Equations~\eqref{eq:partial_res1} and \eqref{eq:partial_res2}, the difference between $f_{2m+3}$ and $f_{2m+1}$ can be expressed as
\begin{equation*}
    B(m+1 ; 2m+3, p) - B(m ; 2m+1, p) = p(1-2p) \Prob (S_{2m+1} = m),
\end{equation*}
which is negative if and only if $p \ge 1/2$.

To finish the proof, let $\alpha \le 1/2$ and notice that, given the definition of $p_{N, \alpha}^*$, we have
\begin{equation}
    \label{eq:alpha_F_p_star}
    \alpha = B(m+1 ; 2m+3, p_{2m+3,\alpha}^*) = B(m ; 2m+1, p_{2m+1,\alpha}^*).
\end{equation}
Since $\alpha \le 1/2$, we know from Lemma~\ref{lem:p_N_smaller} that $p_{2m+1,\alpha}^* \ge 1/2$ and therefore:
\begin{equation}
    \label{eq:F_res_a}
    B(m ; 2m+1, p_{2m+1,\alpha}^*) \ge B(m+1 ; 2m+3, p_{2m+1,\alpha}^*).
\end{equation}
By combining Equations~\eqref{eq:alpha_F_p_star} and \eqref{eq:F_res_a}, we obtain
\begin{equation*}
    B(m+1 ; 2m+3, p_{2m+3,\alpha}^*) \ge B(m+1 ; 2m+3, p_{2m+1,\alpha}^*),
\end{equation*}
which is only true when $p_{2m+1,\alpha}^* \ge p_{2m+3,\alpha}^*$ since $g_N$ is decreasing according to Lemma~\ref{lem:gN}.

\textbf{Proof of Lemma~\ref{lem:p_var_with_N}(b).} We focus now on even integers. We can similarly determine the difference between $f_{2(m+1)}$ and $f_{2m}$ as
\begin{align*}
    B & (m+1 ; 2(m+1), p) - B(m ; 2m, p)\\
     & = (1-p)^2 \Prob (S_{2m} = m+1) - p^2 \Prob (S_{2m} = m),
\end{align*}
so that
\begin{align*}
    f_{2(m+1)} - f_{2m} = & \, p^{m+1}(1-p)^m \left[ (1-p)\binom{2m}{m+1} - p\binom{2m}{m} \right]\\
    = & \, p^{m+1}(1-p)^m (2m)! \left[ \frac{1-p}{(m+1)!(m-1)!} - \frac{p}{(m!)^2}  \right] \\
    = & \, p^{m+1}(1-p)^m (2m)! \frac{\text{Num}(p)}{(m+1)!(m-1)!(m!)^2}
\end{align*}
where $\text{Num}(p) = m\cdot(m-1)! \left[(1-p)m - p(m+1) \right]$. Thus, $f_{2m}$ is greater than $f_{2(m+1)}$ if and only if $\text{Num}(p) \le 0$ that is
\begin{equation*}
    p \ge \frac{m}{2m+1}.
\end{equation*}
Since $m/(2m+1) < 1/2$ for all $m > 0$, $B(m+1 ; 2(m+1), p) \le B(m ; 2m, p)$ when $p \ge 1/2$.

To finish the proof, let $\alpha \le 1/2$ be such that:
\begin{equation}
    \label{eq:alphaPLemma2bproof}
    \alpha = B(m+1 ; 2(m+1), p_{2(m+1), \alpha}^*) = B(m ; 2m, p_{2m, \alpha}^*).
\end{equation}
Since $B(m+1 ; 2(m+1), p_{2(m+1), \alpha}^*) \le B(m ; 2m, p_{2(m+1), \alpha}^*)$, we have from~\eqref{eq:alphaPLemma2bproof} that:
\begin{equation*}
    B(m ; 2m, p_{2m, \alpha}^*) \le B(m ; 2m, p_{2(m+1), \alpha}^*),
\end{equation*}
which is true when $p_{2m, \alpha}^* \ge p_{2(m+1), \alpha}^*$ since $g_N$ is decreasing according to Lemma~\ref{lem:gN}.

\textbf{Proof of Lemma~\ref{lem:p_var_with_N}(c).} Let $m \in \mathbb{N}$, we have
\begin{align*}
    B (m ; 2m+1, p) = &\Prob (S_{2m+1} \le m) \\
    = & \Prob (S_{2m+1} \le m \mid S_{2m} \le m-1) \Prob (S_{2m} \le m-1)\\
    & \quad + \Prob (S_{2m+1} \le m \mid S_{2m} = m) \Prob (S_{2m} = m)\\
    = & \Prob (S_{2m} \le m-1) + (1-p) \Prob (S_{2m} = m).
\end{align*}
Because
\begin{equation*}
    \Prob (S_{2m} \le m) = \Prob (S_{2m} \le m-1) + \Prob (S_{2m} = m),
\end{equation*}
the difference
\begin{equation*}
    f_{2m+1} - f_{2m} = - p \Prob (S_{2m} = m)
\end{equation*}
is negative for all $p \in [0, 1]$. Hence, let $\alpha$ be such that:
\begin{equation}
    \label{eq:alphaPLemma2cProof}
    \alpha = B(m ; 2m, p_{2m, \alpha}^*) = B(m ; 2m+1, p_{2m+1, \alpha}^*).
\end{equation}
Since $B(m ; 2m+1, p_{2m, \alpha}^*) \le B(m ; 2m, p_{2m, \alpha}^*)$, we have from~\eqref{eq:alphaPLemma2cProof} that:
\begin{equation*}
    B(m ; 2m+1, p_{2m, \alpha}^*) \le B(m ; 2m+1, p_{2m+1, \alpha}^*),
\end{equation*}
which is true when $p_{2m, \alpha}^* \ge p_{2m+1, \alpha}^*$ since $g_N$ is decreasing according to Lemma~\ref{lem:gN}.

\section{Supplementary Material: Implementation Details}
\label{app:implementation}
In this section, we detail the implementation of our methods as well as the formulation of the plausibility-based benchmarks.

\subsection{SAA-based Methods}
The SAA-based methods such as Direct-SAA, Robust-SAA and the naive benchmark are implemented directly using the formulation of \cite{Parmentier2021} and its openly available implementation. The constraint on the target classification score in \eqref{opt:ce:target} is adapted for each method: 
\begin{itemize}[topsep=-0.5\parskip,
                  noitemsep]
    \item $\tau = 1/2$ for the naive benchmark,
    \item $\tau = \pStar$ for the Direct-SAA, and
    \item $\tau =\rho_{N, \alpha, \beta}^*$ for the Robust-SAA.
\end{itemize}

\textbf{Infeasibility.} When the robustness target is high, the resulting optimization problem might be infeasible, for instance, due to the actionability constraints. Thus, we relax Constraint~\eqref{opt:ce:target} and add a penalty cost $z_{\textit{pen}} = 5000 \cdot d$ where $d$ is the number of features. The objective of Problem~\eqref{opt:ce} is then:
\begin{equation*}
    f(x, x_{n+1}) + z_{\textit{pen}} \cdot \nu,
\end{equation*}
where $\nu$ is the relaxation term, and the relaxed constraint is:
\begin{equation*}
     h_N^0(x, \bm{\xi}^0) \ge \tau - \nu.
\end{equation*}

\subsection{RobX}
The RobX algorithm has been developed by \cite{Dutta2022} for providing explanations that are robust to an evolving data set. Their experiments target XGBoost ensembles, which are not randomized. Hence, the robustness of explanations obtained with RobX to algorithmic uncertainty has not been studied so far.

Since there is no available public implementation of RobX, we re-implemented the algorithm from scratch. Our implementation covers the edge cases that were unspecified in \cite{Dutta2022}, e.g., if the target class is not $0$ but $1$, or whenever no stable neighbors or no stable explanations are found after a maximum number of steps. We noted that using the algorithm as described in \cite{Dutta2022} does not lead to obtaining evenly spaced explanations from the initial naive explanation to its stable neighbors. We guessed that it was likely the result of a typo in the paper pseudo-code and therefore slightly revised the iterative algorithm to interpolate between the initial naive explanation and its stable neighbors

We have tried multiple configurations of the hyperparameters in a grid search. For the final experiments generating the results reported in our response, we use the number of perturbation samples and the standard deviation of the perturbation as recommended by \cite{Dutta2022} (i.e., $K=1000$ and $\sigma=0.1$). We set the number of neighbors to $c=10$ and the step size to $\alpha=0.05$ (no values are suggested in the original paper). The threshold parameter ($\tau$ in the notation of \citet{Dutta2022}) varies with values in $\{0.1, 0.2, 0.3, 0.4, 0.5, 0.6, 0.7\}$.

\subsection{Isolation Forests} Isolation forests have been used by \cite{Parmentier2021} to ensure that the counterfactual explanation is not an outlier of the target sample distribution. Isolation forests measure the anomaly of an observation by comparing the path length of the isolation forest to the average path length of a binary search tree. This plausibility-based benchmark is implemented as follows. First, an isolation forest made of $N_{\textsc{if}}=50$ trees is trained on the samples of the training set with the target class. The isolation forest is trained using scikit-learn. The contamination parameter is varied as explained in Section~\ref{sec:numStudy} and the other parameters are kept at their default values. The contamination parameter impacts the conservativeness of the model: the higher the contamination parameter, the higher the percentage of samples of the distribution that are classified as outliers. Then, the counterfactual explanation model is solved with additional constraints to state that the counterfactual is not classified as an outlier by the isolation forest. 

\allowdisplaybreaks
We modify the formulation of \cite{Parmentier2021} to allow varying contamination parameters. Let $\mathcal{T}_{\textsc{if}}$ be the sets of isolation trees and $\mathcal{V}_t^L$ the set of leaf nodes of tree $t$. Denote by $\delta_{v}$ the depth of leaf node $v$ and by $c(n)$ the average path length of a binary search tree using $n$ samples. Following the implementation of \cite{Parmentier2021}, the decision variable indicates $y_{v,t}$ is equal to $1$ if the counterfactual explanation ends in leaf node $v$ of tree $t$. The plausibility constraints are implemented as:
\begin{align}
    & I_t  = \sum_{v \in \mathcal{V}_t^L} (\delta_{v} + c(n_{v,t})) y_{v,t}, \quad \forall t \in \mathcal{T}_{\textsc{if}} \label{eq:ifDepth} \\
    & A = \frac{1}{N_{\textsc{if}}} \sum_{t \in \mathcal{T}_{\textsc{if}}} \frac{I_t}{c(n)} \label{eq:ifAnom},\\ 
    & A \le \log(-I_{\textsc{off}}) / \log(2) + \nu \label{eq:ifCons},\\
    & I_t, A, \nu \ge 0.
\end{align}
Constraint~\eqref{eq:ifDepth} measures the depth of the isolation trees, where $n_{v,t}$ is the number of samples in leaf node $v$ of tree $t$. Constraint~\eqref{eq:ifAnom} measures the anomaly of the counterfactual and Constraint~\eqref{eq:ifCons} restricts the anomaly of the counterfactual to be below the classification threshold of the trained isolation forest. The parameter $I_{\textsc{off}}$ is the forest offset that depends on the contamination parameter, following the scikit-learn implementation. As for the SAA-based methods, $\nu$ is a decision variable that relaxes the problem in case of infeasibility and is penalized with a large cost in the objective.

\subsection{Local Outlier Factor}
The local outlier factor~(LOF) is an outlier detection method used by \cite{Kanamori2020} to ensure the plausibility of counterfactual explanations. It is based on measuring the proximity of an observation to a distribution by comparing the distance of samples to their nearest neighbors. \cite{Kanamori2020} use the 1-LOF, a simpler implementation of the general k-LOF method in which only the nearest neighbor is considered.

The 1-LOF is implemented as a penalty term added to the objective function of Problem~\eqref{opt:ce} as:
\begin{equation*}
    f(x, x_{n+1}) + \sum_{i=1}^{n} l^{(i)} \cdot \rho_i,
\end{equation*}
where $l^{(i)}$ is the local reachability density of the training sample $x_i$ and $\rho_i$ is a decision variable that measures the reachability distance of $x$. The local reachability density measures how close a sample is to its nearest neighbors. In the 1-LOF case, the local reachability density is expressed as $l^{(i)} = lrd(x_i) = \Delta(1NN(x_i))^{-1}$,
where $1NN(x_n)$ is the nearest neighbor of $x_n$ and $\Delta(x)$ is the distance between x and its nearest neighbor. The reachability distance of $x$ and $x_i$ is the maximum between the distance between $x$ and $x_i$ and the distance between $x_i$ and its nearest neighbor.

Finding the reachability distance of $x$ is implemented through the following constraints:
\begin{align*}
    & \sum_{i=1}^n \nu_i = 1, & \\*
    & f(x_{i_1}, x_{i_2}) \le (1 - \nu_{i_1}) \cdot d, & \forall i_1, i_2, \in \{1,\dots, n\}, \\
    & \rho_i \ge \Delta(x_i) \cdot \nu_i, & \forall i \in \{1,\dots, n\}, \\
    & \rho_i \ge f(x, x_{i}) - (1 - \nu_i)  \cdot d, & \forall i \in \{1,\dots, n\}, \\
    & \nu_i \in \{0, 1\}, \rho_i \ge 0 & \forall i \in \{1,\dots, n\}.
\end{align*}
The binary variable $\nu_i$ tracks the nearest neighbor of $x$ in the training set. This method does not scale to large problem instances. In particular, the number of constraints grows quadratically in the number of samples in the training set. Thus, we simplify the formulation by restricting the number of training samples considered. We implement the above constraints only for the $N_r = 10$ nearest neighbors of the initial observation $x_{n+1}$. This allows us to repeatedly obtain counterfactual explanations using the 1-LOF plausibility constraints on all data sets and for all penalty factors $\lambda$. Interestingly, the ''Online News Popularity'' data set is still challenging to solve to optimality for $\lambda \ge  0.1$. In this case, we further reduce the relative MIP optimality gap parameter of the solver to $10\%$. Thus, we can apply the 1-LOF method to larger and more diverse data sets than \cite{Kanamori2020}, who impose a time limit of $600$ seconds to solve the optimization model.

\section{Supplementary Material: Additional Experimental Results}
\label{app:addExp}
\subsection{Summary of Data Sets}
\label{app:datasets}
A summary of the data sets is given in Table~\ref{tab:datasets}, where $n$ is the size of the training set, $d$ the number of features, and $(d_B, d_C, d_D, d_N)$ the number of binary, categorical, discrete numerical, and continuous numerical features, respectively.
\begin{table}[ht!]
    \center
    \vspace{-5mm}
    \begin{sc}
	\caption{Summary of the data sets.}
	\label{tab:datasets}
        \begin{tabular}{lrrrrrrl}
    \toprule
    \textbf{Data set}      & $n$   & $d$ & $d_B$ & $d_C$ & $d_D$ & $d_N$ & Source     \\
    \midrule
    Adult (A)                 & 45222 & 11  & 2     & 4     &  3    & 2     & UCI        \\
    Compas (C)                & 5278  & 5   & 2     & 0     &  3    & 0     & ProPublica \\
    Credit Card Default (CC)  & 29623 & 14  & 3     & 0     &  11   & 0     & UCI        \\
    German Credit (GC)        & 1000  & 9   & 0     & 3     &  5    & 1     & UCI        \\
    Online News (ON)          & 39644 & 47  & 2     & 2     &  6    & 37    & UCI        \\
    Phishing (P)        & 11055 & 30  & 8     & 0     &  22   & 0     & UCI        \\
    Spambase (S)              & 4601  & 57  & 0     & 0     &   0   & 57    & UCI        \\
    Students Performance (SP) & 395   & 30  & 13    & 4     &  13   & 0     & UCI\\
    \bottomrule
\end{tabular}
    \end{sc}
    \vspace{-5mm}
\end{table}

\subsection{Computation Time}
We show the computation time to solve the optimization models of all the implemented methods in Figure~\ref{fig:runtime}. For each data set and for each method, we show the boxplot of the runtimes for all parameters (e.g. for all values of $\alpha$ in the Direct-SAA case). The figure shows that the Direct-SAA method has only a small increase in computation time compared to the naive benchmark. Conversely, the plausibility-based benchmarks require more computational effort to find explanations, especially for large data sets.
\begin{figure}[ht!]
	\centering
	\includegraphics[width=0.9\linewidth]{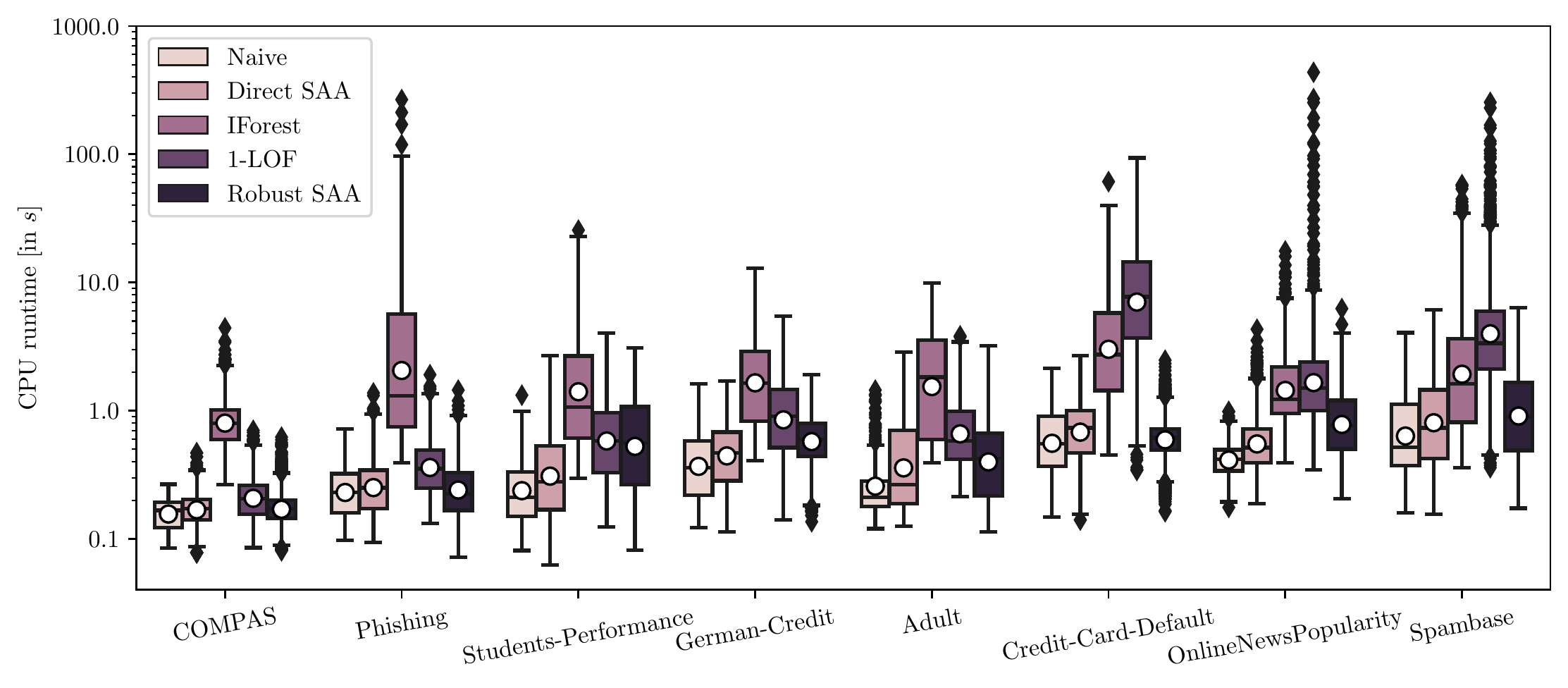}
	\caption{Boxplot of computation times over the different data sets.}
	\label{fig:runtime}
\end{figure}

\subsection{Examples of Counterfactual Explanations}
\label{app:cfTraj}
We provide additional examples of counterfactual explanations for increasing robustness levels. Figures~\ref{fig:adult-cfTraj} and \ref{fig:ccd-cfTraj} illustrate the counterfactual explanations obtained on the \textsc{Adult} and \textsc{Credit Card Default} data sets, respectively. Interestingly, the counterfactual explanations on the first data set are very sparse and involve only a few features. In contrast, the counterfactual explanations obtained on the second data set involve more diverse feature perturbations. It is also interesting to observe that the counterfactual explanations are sometimes not modified as the robustness level increases. This threshold effect is due to the discrete values of the modified features.
\begin{figure}[htb!]
    \centering
    \includegraphics[width=0.85\linewidth]{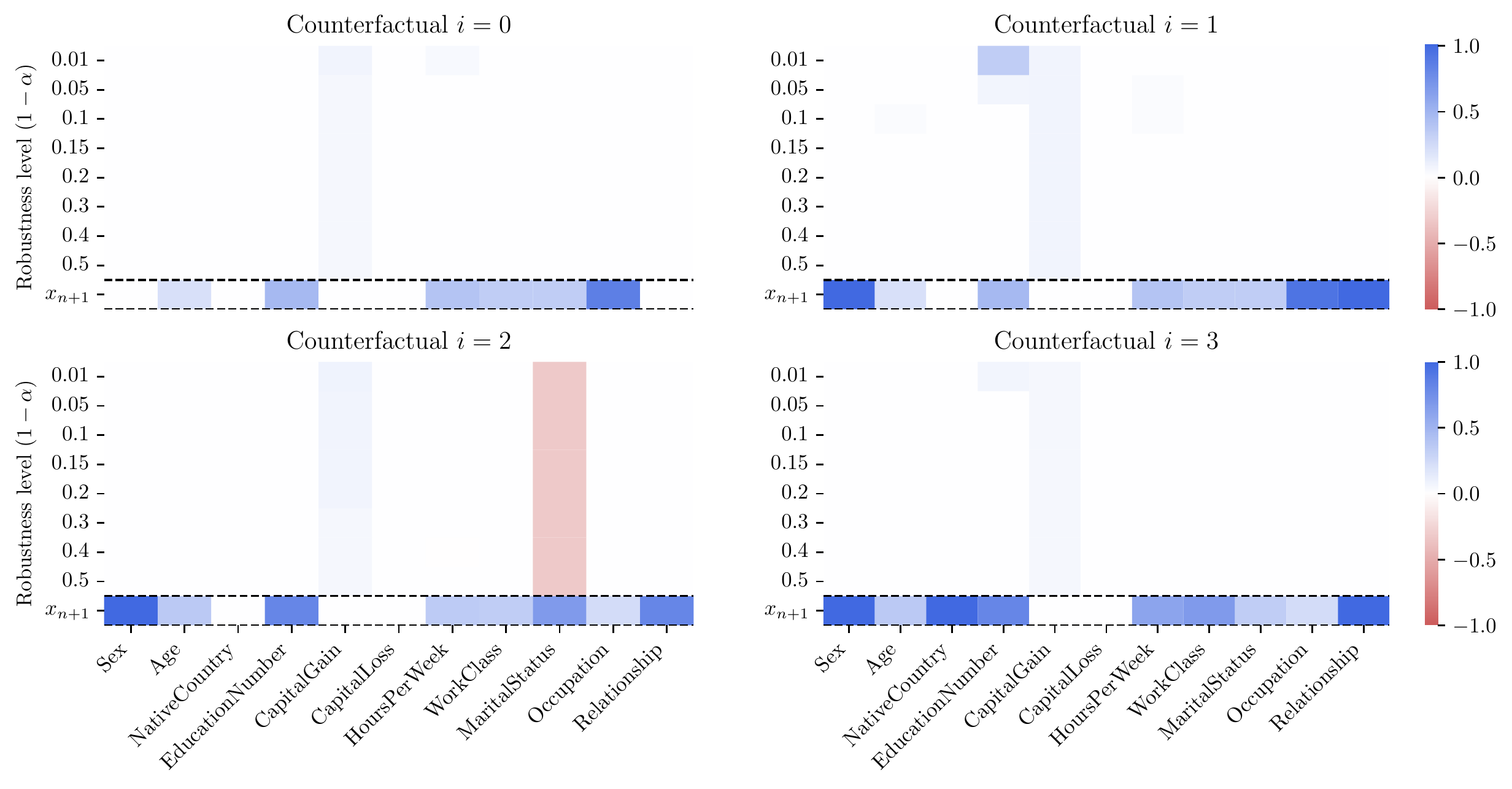}
    \caption{\textsc{Adult}: initial observation and counterfactual changes to reach the target class for increasing robustness level ($1-\alpha$).}
    \label{fig:adult-cfTraj}
\end{figure}

\begin{figure}[htb!]
    \centering
    \includegraphics[width=0.85\linewidth]{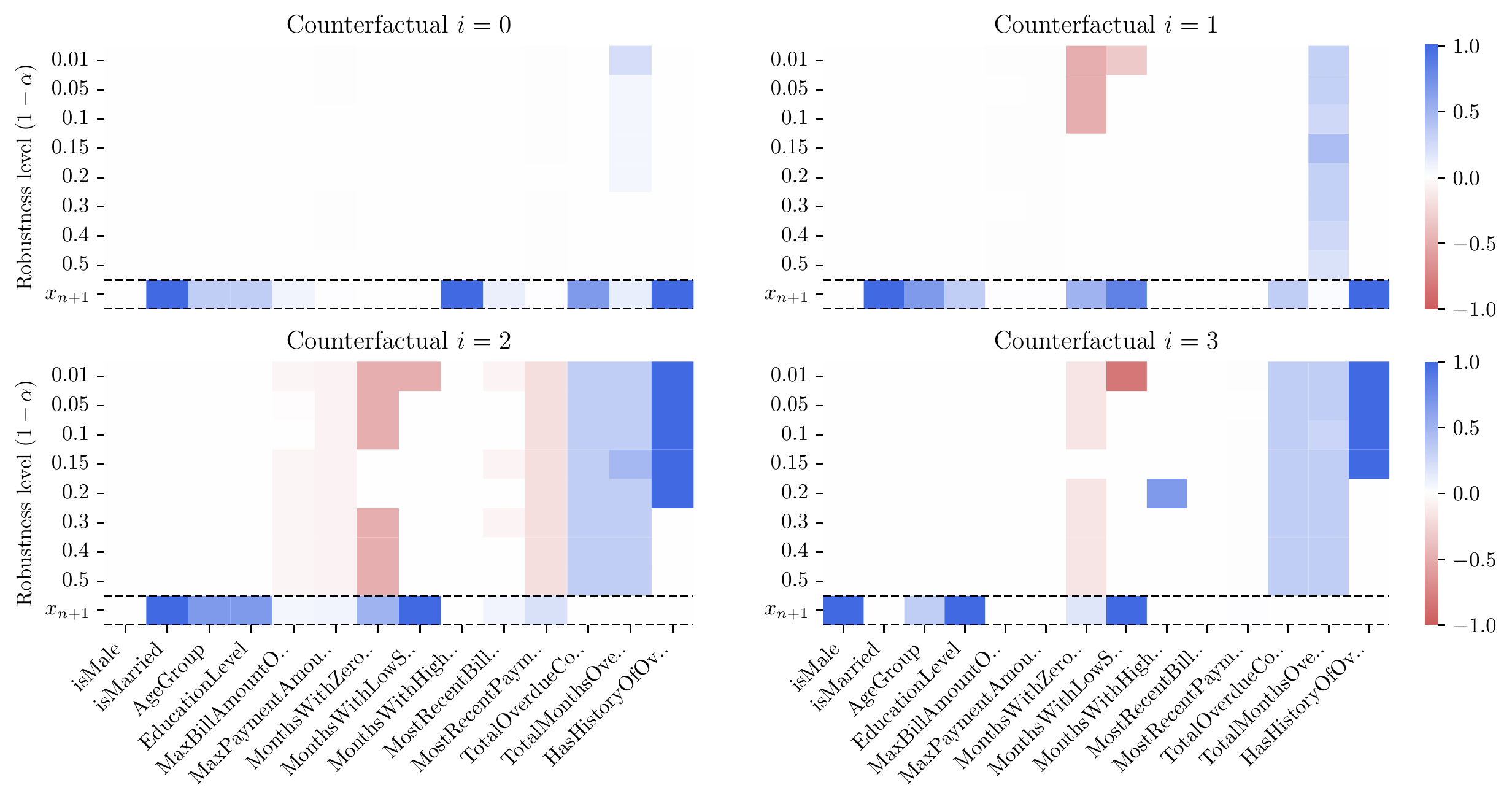}
    \caption{\textsc{Credit Card Default}: initial observation and counterfactual changes to reach the target class for increasing robustness level ($1-\alpha$).}
    \label{fig:ccd-cfTraj}
\end{figure}

\subsection{Robust Explanations of Large Forests}
\label{app:largeForests}

We perform the same experiment as in Section~\ref{sec:robExp}, but now using larger forests of size $N=400$. Due to the added computational effort, we repeat the experiments $S_{\textsc{SIM}}=20$ times for each simulation setting. We start by measuring the robustness of the naive explanations. The results are presented in Table~\ref{tab:naiveN400}. Although the average robustness of naive explanations increases slightly compared to the results with forests of size $N=100$, the results are overall similar. The average robustness of naive explanations is close to $50\%$ on many data sets, and falls below $25\%$ for the complex \textsc{Spambase} data set.
\begin{table}[ht!]
    \vspace{-1mm}
    \center
    \caption{Average naive explanations that remain valid when retraining a random forest with $N=400$ trees with fixed training data.}
    \label{tab:naiveN400}
    \begin{tabular}{lcccccccc}
    \toprule
    \textbf{Data set} &   A  &  C  &   CC  &  GC  &  ON  &  P  &  S   &   SP \\
    \midrule
    Validity [in \%]  &  59 &  92 &  39 &  47 &  49 &  89 &  24 &  53 \\
    \bottomrule
\end{tabular}
    \vspace{-3mm}
\end{table}

We now measure the robustness of explanations obtained through our methods. The results are presented in Figure~\ref{fig:algRobResults-validity-large-forests}. The performance of our method is essentially identical as in the case of $N=100$: the Direct-SAA method provides robust explanations at the desired level on all data sets except \textsc{Spambase}. In this case, the Robust-SAA method gives robust explanations for medium and high robustness targets using $\beta = 0.1$ and $\beta = 0.05$ respectively. This confirms that increasing the size of the random ensemble does not protect against algorithmic uncertainty when using the naive threshold $\tau = 1/2$ as in Problem~\eqref{opt:ce}.
\begin{figure}[ht!]
	\centering
	\resizebox{\linewidth}{!}{\input{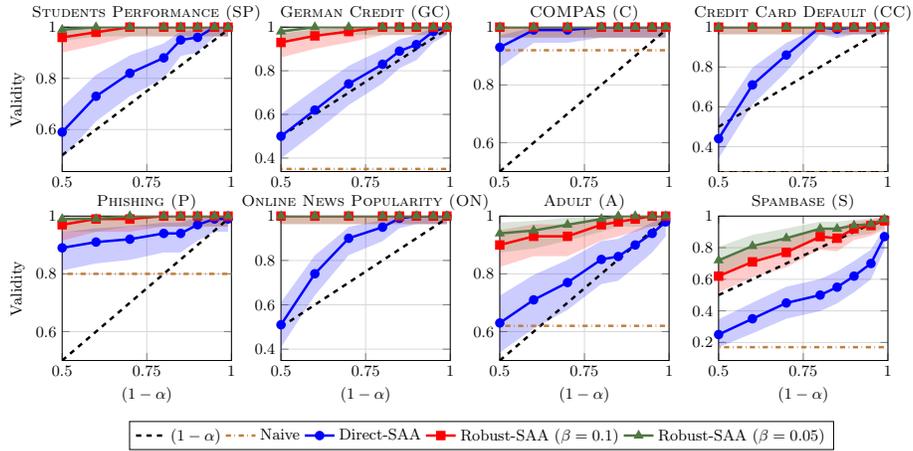}}
	\caption{Validity of robust counterfactuals for random forest with $N=400$ as a function of the robustness target $(1-\alpha)$.}
	\label{fig:algRobResults-validity-large-forests}
\end{figure}

\subsection{Experiments with an Evolving Data Set}
\label{app:evolvingData}
We assume throughout the paper that the training sample is fixed, we now relax this assumption and perform experiments with an evolving data set. Specifically, we consider the setting in which additional data is obtained before the model is retrained. For each data set, we sample $80\%$ of the data to train a first random forest and determine a counterfactual explanation. We then add the remaining $20\%$ of the data set and train a second random forest to validate the counterfactual explanation. The achieved validity of our methods is shown in Figure~\ref{fig:algRobResults-validity-evolving-data} as a function of the robustness target.

\begin{figure}[ht!]
	\centering
	\resizebox{\linewidth}{!}{\input{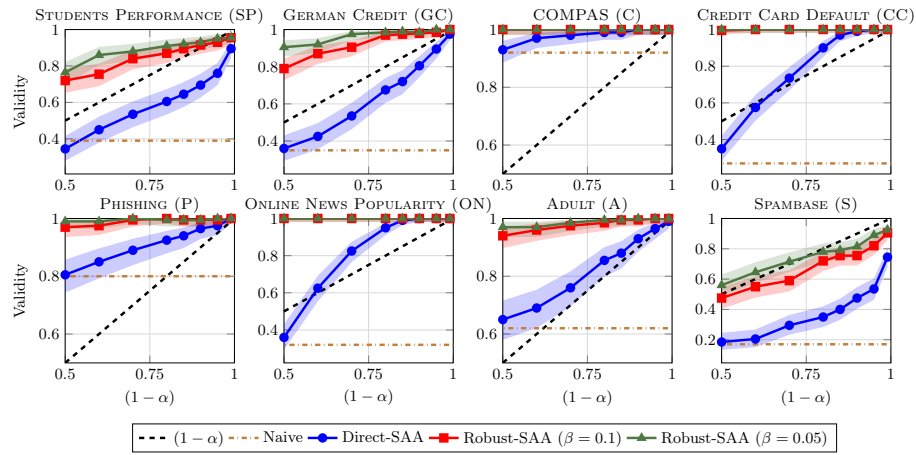}}
	\caption{Validity of robust counterfactuals as a function of the robustness target $(1-\alpha)$.}
	\label{fig:algRobResults-validity-evolving-data}
\end{figure}

Overall, the results are similar to the ones with fixed training data, as shown in Figure~\ref{fig:algRobResults-validity}. The Direct-SAA method satisfies the robustness target on $5$ out of $8$ data sets. The two new data sets for which the robustness target is not satisfied are \textsc{Student Performance} and \textsc{German Credit}. This can be well explained by the fact that these two data sets have the fewest data points ($395$ and $1000$ observations respectively). This suggests that the robustness of counterfactual explanations when the data sets evolve over time also depends on the size of the data set. The Robust-SAA method provides robust counterfactual explanations for these two data sets as well as for the \textsc{Spambase} data set. Hence, the Robust-SAA method is especially relevant when (1)~there are many features with high predictive importance, and (2)~the size of the data set is small and new data is observed before the model is retrained. The fact that the Direct-SAA performs well even with evolving data set is noteworthy: it shows that, when the size of the training set is large, algorithmic uncertainty dominates the sampling uncertainty of the evolving training data.

\subsection{Feature Importance, Sparsity, and Robustness}
\label{app:featImportanceAndSparsity}
We investigate the link between the feature changes from the counterfactual explanations and the predictive importance of the features. We measure the feature importance by using the permutation method of \cite{Breiman2001}. We also measure the average absolute changes to the features of counterfactual explanations given by the Direct-SAA method with robustness target $(1-\alpha) = 0.5$. We show the feature importance and average feature changes in Figures~\ref{fig:featureImportance1} and \ref{fig:featureImportance2} for two data sets with high robustness and two data sets with low robustness, respectively. The feature importance and changes are normalized to better compare the two measures. Remarkably, feature importance and the average feature changes are very similar on all data sets. Some of the main differences between feature importance and feature changes can be explained by the actionability constraints. For instance, the ''AgeGroup'' feature can only increase in the \textsc{Compas} data set. The data sets shown in Figure~\ref{fig:featureImportance1} have especially sparse important features. It is also for these two data sets that the robustness of naive explanations is the highest. This suggests that having a few features with high importance favors obtaining robust naive counterfactual explanations. Conversely, a data set with many important features is a good indicator that robustness should be explicitly considered when generating counterfactual explanations.
\begin{figure}[ht!]
    \centering
    \begin{subfigure}[b]{0.49\textwidth}
        \centering
        \includegraphics[width=\textwidth]{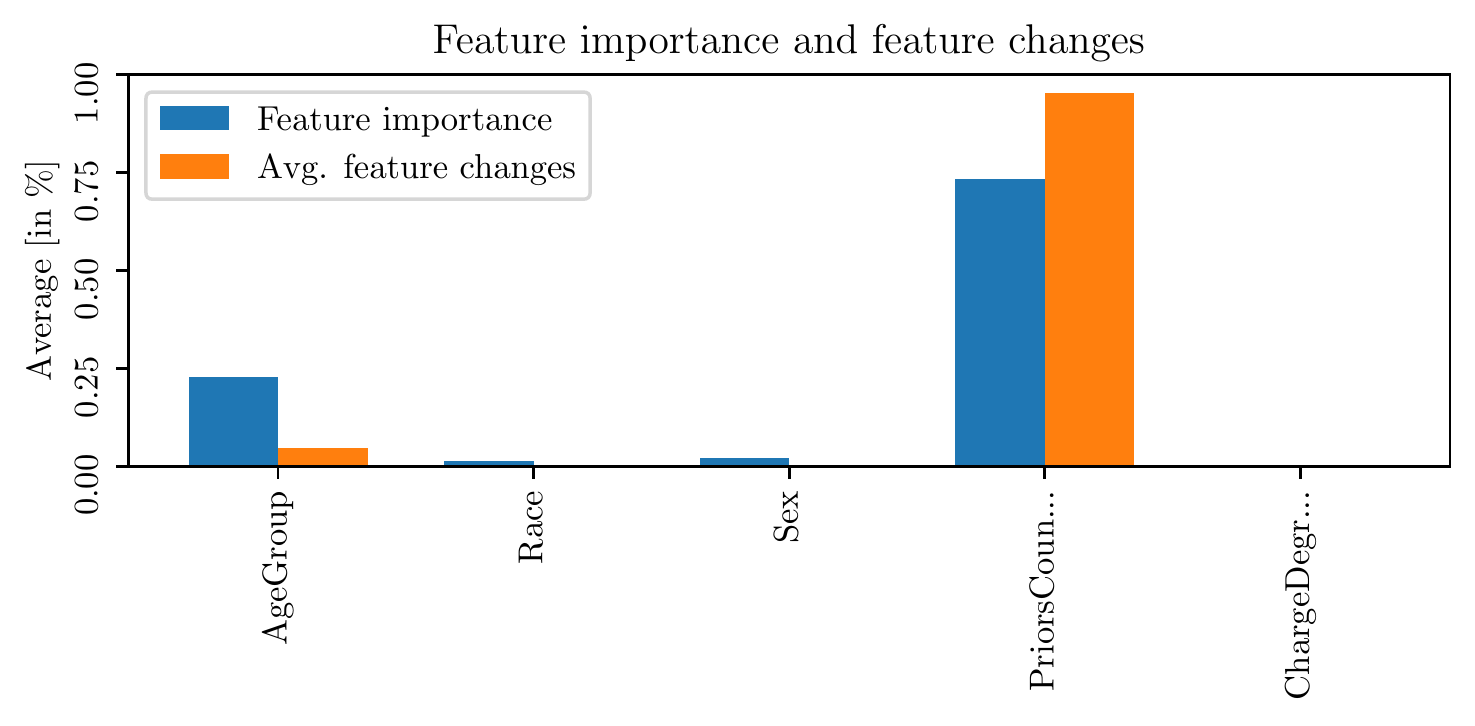}
        \caption[]
        {{\small \textsc{Compas}}}
    \end{subfigure}
    \begin{subfigure}[b]{0.49\textwidth}  
        \centering 
        \includegraphics[width=\textwidth]{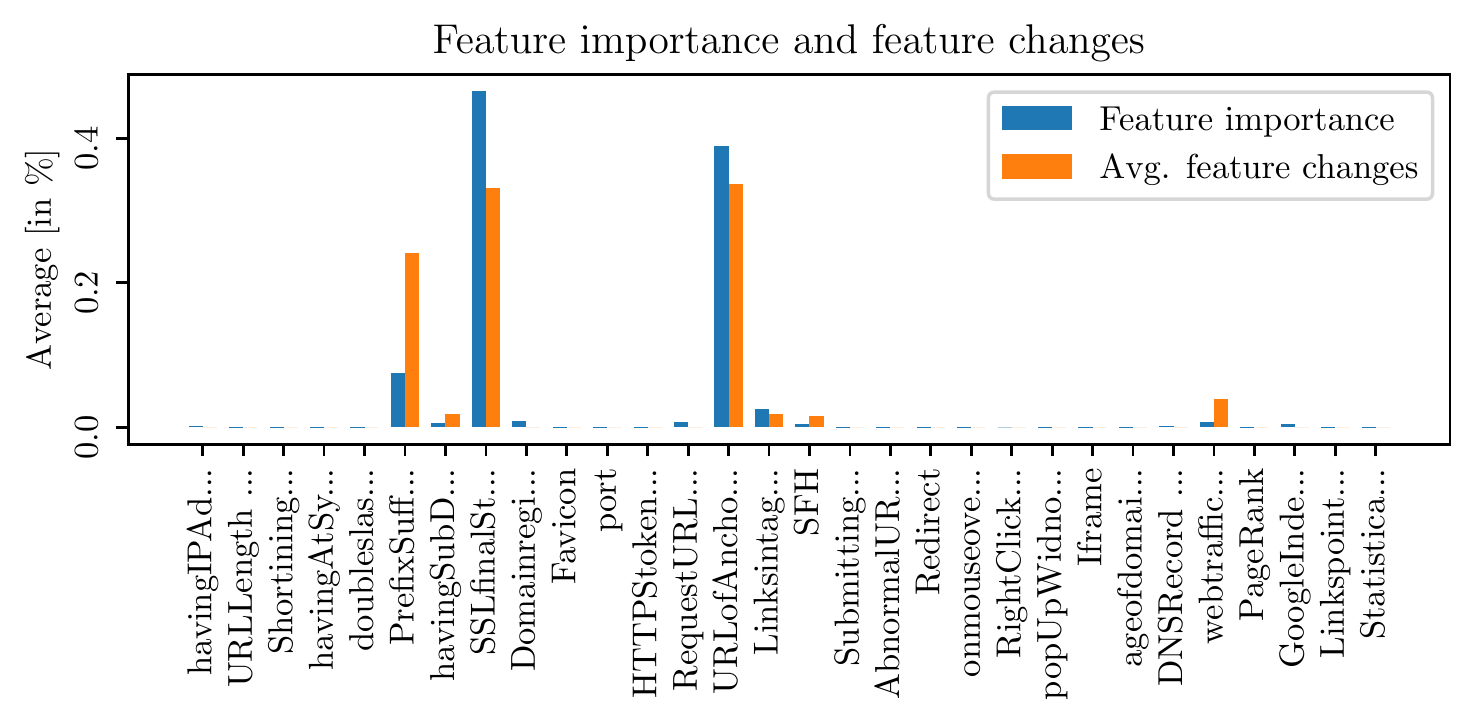}
        \caption[]%
        {{\small \textsc{Phishing}}}
    \end{subfigure}
    \caption[Feature importance and feature changes for two data sets with high robustness.]
    {\small Feature importance and feature changes for two data sets with high robustness.} 
    \label{fig:featureImportance1}
\end{figure}
\begin{figure}[ht!]
    \centering 
    \begin{subfigure}[b]{0.49\textwidth}   
        \centering 
        \includegraphics[width=\textwidth]{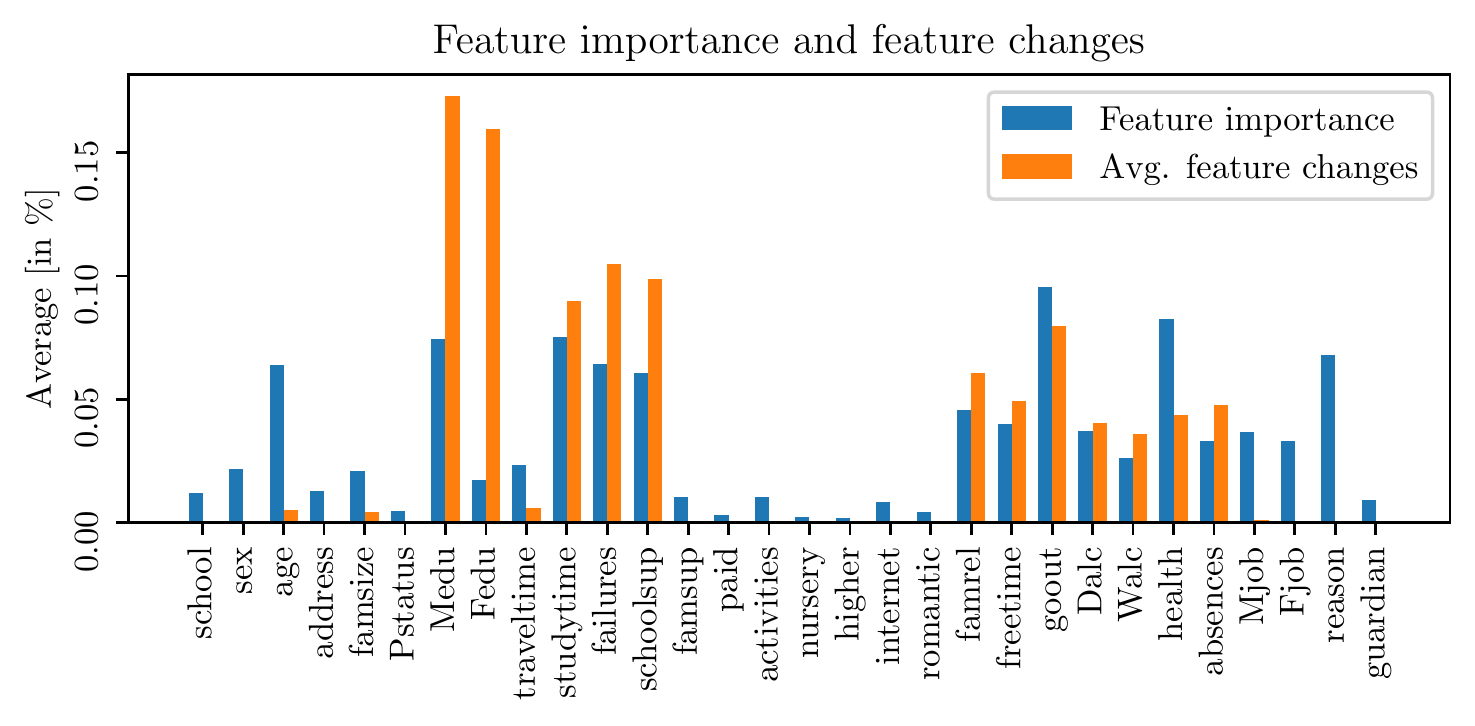}
        \caption[]%
        {{\small \textsc{Students Performance}}}
    \end{subfigure}
    \begin{subfigure}[b]{0.49\textwidth}   
        \centering 
        \includegraphics[width=\textwidth]{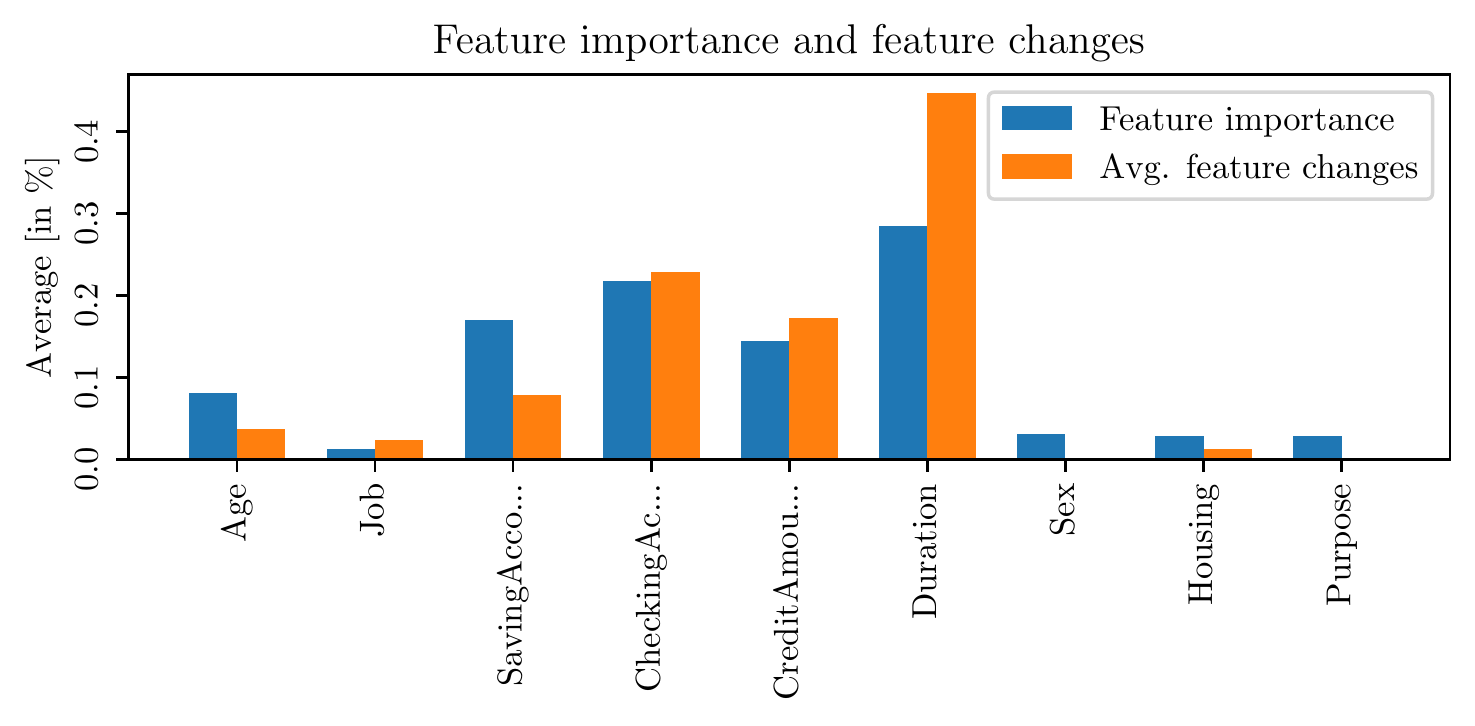}
        \caption[]%
        {{\small \textsc{German Credit}}}
    \end{subfigure}
    \caption[Feature importance and feature changes for two data sets with low robustness.]
    {\small Feature importance and feature changes for two data sets with low robustness.} 
    \label{fig:featureImportance2}
\end{figure}

\end{document}